\def\eqref#1{equation~\ref{#1}}
\def\1{\bm{1}}
\DeclareMathAlphabet{\mathsfit}{\encodingdefault}{\sfdefault}{m}{sl}
\SetMathAlphabet{\mathsfit}{bold}{\encodingdefault}{\sfdefault}{bx}{n}
\providecommand{\customgenericname}{}
\newcommand{\newcustomtheorem}[2]{%
  \newenvironment{#1}[1]
  {%
   \renewcommand\customgenericname{#2}%
   \renewcommand\theinnercustomgeneric{##1}%
   \innercustomgeneric
  }
  {\endinnercustomgeneric}
}
\definecolor{azure}{rgb}{0.0, 0.5, 1.0}
\definecolor{greenpigment}{rgb}{0.0, 0.65, 0.31}
\definecolor{awesome}{rgb}{1.0, 0.13, 0.32}
\newcommand{\tasktext}[1]{\texttt{#1}}
\newtheorem{theorem}{Theorem}
\newtheorem{lemma}[theorem]{Lemma}
\newtheorem{defn}{Definition}
\title{Robust Constrained Reinforcement Learning for Continuous Control with Model Misspecification}
\author{Daniel J. Mankowitz\thanks{indicates equal contribution.}\\\texttt{dmankowitz@google.com} \And Dan A. Calian\footnotemark[1]\\\texttt{dancalian@google.com} \AND Rae Jeong \And Cosmin Paduraru \And Nicolas Heess \And Sumanth Dathathri \And Martin Riedmiller \And Timothy Mann \AND {\normalfont DeepMind}\\ London, UK}
\begin{document}

\maketitle

\begin{abstract}
Many real-world physical control systems are required to satisfy constraints upon deployment. Furthermore, real-world systems are often subject to effects such as non-stationarity, wear-and-tear, uncalibrated sensors and so on. Such effects effectively perturb the system dynamics and can cause a policy trained successfully in one domain to perform poorly when deployed to a perturbed version of the same domain. This can affect a policy's ability to maximize future rewards as well as the extent to which it satisfies constraints. We refer to this as constrained model misspecification. We present an algorithm that mitigates this form of misspecification, and showcase its performance in multiple \textit{simulated} Mujoco tasks from the Real World Reinforcement Learning (RWRL) suite.
\end{abstract}

\section{Introduction}

%Motivation - RL is scaling
Reinforcement Learning (RL) has had a number of recent successes in various application domains which include computer games \citep{silver2017mastering, mnih2015human,tessler2017deep} and robotics \citep{abbas2018a}. As RL and deep learning continue  to scale, an increasing number of real-world applications may become viable candidates to take advantage of this technology. However, the application of RL to real-world systems is often associated with a number of challenges \citep{Dulac2019,dulac2020empirical}. We will focus on the following two:

%What are the problems?
\textbf{Challenge 1 - Constraint satisfaction}: One such challenge is that many real-world systems have constraints that need to be satisfied upon deployment (i.e., hard constraints); or at least the number of constraint violations as defined by the system need to be reduced as much as possible (i.e., soft-constraints). This is prevalent in applications ranging from physical control systems such as autonomous driving and robotics to user facing applications such as recommender systems. 

\textbf{Challenge 2 - Model Misspecification (MM)}: Many of these systems suffer from model misspecification. We refer to the situation in which an agent is trained in one environment but deployed in a different, perturbed version of the environment as an instance of \textit{model misspecification}. This may occur in many different applications and is well-motivated in the literature \citep{mankowitz2018learning, Mankowitz2019,derman2018soft,derman2019,iyengar2005robust,tamar2014scaling}. % An important example is the non-stationarity of physical control systems arising, for instance, from variable weather conditions, uncalibrated or broken sensors, wear-and-tear etc. In these situations, the agent is required to perform the task in a version of the environment that is perturbed compared to what it encountered during training.

There has been much work on constrained optimization in the literature \citep{altman1999constrained,Tessler2018a,efroni2020explorationexploitation,achiam2017constrained,bohez2019value}.  However, to our knowledge,  the effect of model misspecification on an agent's ability to satisfy constraints at test time has not yet been investigated. % We refer to this scenario as \textit{constrained model misspecification}.

\textbf{Constrained Model Misspecification (CMM)}:
We consider the scenario in which an agent is required to satisfy constraints at test time but is deployed in an environment that is different from its training environment (i.e., a perturbed version of the training environment). Deployment in a perturbed version of the environment may affect the return achieved by the agent as well as its ability to satisfy the constraints. We refer to this scenario as \textit{constrained model misspecification}.

This problem is prevalent in many real-world applications where constraints need to be satisfied but the environment is subject to state perturbations effects such as wear-and-tear, partial observability etc., the exact nature of which may be unknown at training time. Since such perturbations can significantly impact the agent's ability to satisfy the required constraints it is insufficient to simply ensure that constraints are satisfied in the unperturbed version of the environment. Instead, the presence of unknown environment variations needs to be factored into the training process. One area where such considerations are of particular practical relevance is sim2real transfer where the sim2real gap can make it hard to ensure that constraints will be satisfied on the real system \citep{andrychowicz2018learning,peng2018sim,wulfmeier2017mutual,rastogi2018sample,Christiano2016}. 

%Of course, one could address this issue by limiting the capabilities of the system being controlled in order to ensure that constraints are never violated, for instance by limiting the amount of current in an electric motor. Our hope is that our methods can outperform these more ad-hoc techniques, while still ensuring constraint satisfaction in the deployment domain.    

%Our solution
\textbf{Main Contributions}: In this paper, we aim to bridge the two worlds of model misspecification and constraint satisfaction. We present an RL objective that enables us to optimize a policy that aims to be robust to CMM. Our contributions are as follows: 
\begin{itemize}
    \item We introduce the Robust Return Robust Constraint (R3C) and Robust Constraint (RC) RL objectives that aim to mitigate CMM as defined above. This includes the definition of a Robust Constrained Markov Decision Process (RC-MDP).
    \item Define the corresponding R3C and RC value functions and Bellman operators. We also provide an argument showing that these Bellman operators converge to fixed points. These are implemented in the policy evaluation step of actor-critic R3C algorithms.
    \item Implement five different R3C and RC algorithmic variants on top of D4PG and DMPO, (state-of-the-art continuous control RL algorithms).
    \item Empirically demonstrate the superior performance of our algorithms, compared to various baselines, with respect to mitigating CMM. This is shown consistently across $6$ different Mujoco tasks from the Real-World RL (RWRL) suite\footnote{\url{https://github.com/google-research/realworldrl_suite}}. This includes an investigative study into the learning performance of the robust and non-robust variants respectively.
\end{itemize}
    
    % Implement Robust Return Robust Constraint D4PG (R3C-D4PG) as well as Robust Constraint D4PG (RC-D4PG) \sdathath{DMPO to be mentioned here?}and compare these algorithms to the non-robust variants on seven different mujoco tasks.
    % \sdathath{I think making this bit more concise might read better -- perhaps just 3 points:
    %     1. introduce rc-mdp. based on this, introduce r3c/rc objective.
    %     2. r3c/rc value function and operator, and prove contraction.
    %     3. empirical demonstrate superiority of method}
    
%\end{itemize}

\section{Background}
\label{sec:background}
% A \textbf{Markov Decision Process (MDP)} is defined as the tuple $\langle S, A, r, \gamma, P \rangle$ where $S$ is the state space, $A$ the action space, $r:S\times A \rightarrow \mathbb{R}$ is a bounded reward function; $\gamma \in [0,1]$ is the discount factor and $P:S\times A \rightarrow \Delta^S$ maps state-action pairs to a probability distribution over next states. We use $\Delta^S$ to denote the $|S|-1$ simplex. The goal of a Reinforcement Learning agent for the purpose of control is to learn a policy $\pi:S \times A \rightarrow R$ which maps a state and action to a probability of executing the action from the given state so as to maximize the expected return $J(\pi)=\mathbb{E}^{\pi} [\sum_{t=0}^\infty \gamma^t r_t]$ where $r_t$ is a random variable representing the reward received at time $t$ \citep{sutton2018reinforcement}.  The value function is defined as $V^{\pi}(s) = \mathbb{E}^{\pi}[\sum_{t=0}^{\infty} \gamma^t r_t | s_{0}=s]$ and the action value function as $Q^{\pi}(s,a) = r(s,a) + \gamma \mathbb{E}_{s' \sim P(\cdot | s,a)}[V^{\pi}(s')]$.

\subsection{Markov Decision Processes}
A \textbf{Robust Markov Decision Process (R-MDP)} is defined as a tuple $\langle S, A, R, \gamma, \mathcal{P} \rangle$ where $S$ is a finite set of states, $A$ is a finite set of actions, $R:S\times A \rightarrow \mathbb{R}$ is a bounded reward function and $\gamma \in [0, 1)$ is the discount factor; $\mathcal{P}(s,a) \subseteq \mathcal{M}(S)$ is an uncertainty set where $\mathcal{M}(S)$ is the set of probability measures over next states $s' \in S$. This is interpreted as an agent selecting a state and action pair, and the next state $s'$ is determined by a conditional measure $p(s' \vert s, a) \in \mathcal{P}(s,a)$ \citep{iyengar2005robust}. We want the agent to learn a policy $\pi:S \rightarrow A$, which is a mapping from states to actions that is robust with respect to this uncertainty set. For the purpose of this paper, we consider deterministic policies, but this can easily be extended to stochastic policies too. The robust value function $V^{\pi}:S \rightarrow \mathbb{R}$ for a policy $\pi$ is defined as $V^{\pi}(s) = \inf_{p \in \mathcal{P}(s,\pi(s))} V^{\pi, p}(s)$ where $V^{\pi,p}(s) = r(s,\pi(s)) + \gamma p(s'|s,\pi(s)) V^{\pi,p}(s')$. A rectangularity assumption on the uncertainty set \citep{iyengar2005robust} assumes that ``nature'' can choose a worst-case transition function independently for every state $s$ and action $a$. This means that during a trajectory, at each timestep, nature can choose any transition model from the uncertainty set to reduce the performance of the agent. A robust policy optimizes for the robust (worst-case) expected return objective: $J_{\text{R}}(\pi)=\inf_{p \in \mathcal{P}}\mathbb{E}^{p,\pi} [\sum_{t=0}^\infty \gamma^t r_t ]$. 

The robust value function can be expanded as ${V^{\pi}(s) = r(s,\pi(s)) + \gamma \inf_{p \in P(s,\pi(s))} \mathbb{E}^p [V^{\pi}(s')  | s, \pi(s)]}$. As in \citep{tamar2014scaling}, we  define an operator $\sigma^{inf}_{\mathcal{P}(s,a)}v : \mathbb{R}^{|S|}\rightarrow \mathbb{R}$ as $\sigma^{inf}_{\mathcal{P}(s,a)}v=\inf\{p^\top v | p \in \mathcal{P}(s,a)\}$. We can also define an operator for some policy $\pi$ as $\sigma^{inf}_{\pi}:\mathbb{R}^{|S|}\rightarrow \mathbb{R}^{|S|}$ where $\{ \sigma^{inf}_{\pi} v \}(s) = \sigma^{inf}_{\mathcal{P}(s,\pi(s))}v$. Then, we have defined the Robust Bellman operator as follows $T^{\pi}_{\mathcal{R}} V^\pi = r^\pi + \gamma \sigma^{\inf}_\pi V^\pi$. Both the robust Bellman operator $T_{\mathcal{R}}^\pi: \mathcal{R}^{|S|} \rightarrow \mathcal{R}^{|S|}$ for a fixed policy and the optimal robust Bellman operator $T^*_{\mathcal{R}}v(s) = \max_\pi T_{\mathcal{R}}^\pi v(s)$ have previously been shown to be contractions \citep{iyengar2005robust}.

A \textbf{Constrained Markov Decision Process (CMDP)} is an extension to an MDP and consists of the tuple $\langle S, A, P, R, C, \gamma \rangle$ where $S,A,R$ and $\gamma$ are defined as in the MDP above and $C:S\times A \rightarrow \mathbb{R}^K$ is a mapping from a state $s$ and action $a$ to a $K$ dimensional vector representing immediate costs relating to $K$ constraints. We use $K$=1 from here on in and therefore $C:S\times A \rightarrow \mathbb{R}$. We refer to the cost for a specific state action tuple $\langle s,a \rangle$ at time $t$ as $c_t(s,a)$. The solution to a CMDP is a policy $\pi:S\rightarrow \Delta_A$ that learns to maximize return and satisfy the constraints. The agent aims to learn a policy that maximizes the expected return objective $J_R^\pi=\mathbb{E}[\sum_{t=0}^\infty \gamma^{t}r_t]$ subject to $J_C^\pi=\mathbb{E}[\sum_{t=0}^\infty \gamma^{t}c_t] \leq \beta$ where $\beta$ is a pre-defined constraint threshold. A number of approaches~\citep{Tessler2018a,bohez2019value} optimize the Lagrange relaxation of this objective $\min_{\lambda \geq 0} \max_{\theta} J_R^\pi - \lambda (J_C^\pi- \beta)$ by optimizing the Lagrange multiplier $\lambda$ and the policy parameters $\theta$ using alternating optimization. We also define the constraint value function $V_C^{\pi,p}:S \rightarrow \mathbb{R}$ for a policy $\pi$ as in \citep{Tessler2018a} where $V_C^{\pi,p}(s) = c(s,\pi(s)) + \gamma p(s'|s,\pi(s)) V_C^{\pi,p}(s')$. 
 
 \subsection{Continuous Control RL Algorithms}

We address the CMM problem by modifying two well-known continuous control algorithms by having them optimize the RC and R3C objectives.

The first algorithm is \textbf{Distributed Distributional Deterministic Policy Gradient} (D4PG), which is a state-of-the-art actor-critic continuous control RL algorithm with a deterministic policy \citep{barth2018distributed}. It is an improvement to DDPG~\citep{lillicrap2015continuous} with a distributional critic that is learned similarly to distributional MPO.

 The second algorithm is \textbf{Maximum A-Posteriori Policy Optimization (MPO)}. This is a continuous control RL algorithm that performs policy iteration using an RL form of expectation maximization \citep{abbas2018a,abdolmaleki2018maximum}. We use the distributional-critic version in \citet{abdolmaleki2020distributional}, which we refer to as DMPO.

\section{Robust Constrained Optimization Framework}
We begin by defining a Robust Constrained MDP (RC-MDP). This combines an R-MDP and C-MDP to yield the tuple $\langle S, A, R, C, \gamma, \mathcal{P} \rangle$ where all of the variables in the tuple are defined in Section \ref{sec:background}. We next define two optimization objectives that optimize the RC-MDP. The first objective attempts to learn a policy that is robust with respect to the return as well as constraint satisfaction - Robust Return Robust Constrained (R3C) objective. The second objective is only robust with respect to constraint satisfaction - Robust Constrained (RC) objective. 

Prior to defining these objectives, we make use of the following definitions. 

\begin{defn}
The robust constrained value function $V_C^{\pi}:S \rightarrow \mathbb{R}$ for a policy $\pi$ is defined as $V_C^{\pi}(s) = \sup_{p \in \mathcal{P}(s,\pi(s))} V_C^{\pi, p}(s) = \sup_{p \in \mathcal{P}(s,\pi(s))} \mathbb{E}^{\pi,p}\biggl[ \sum_{t=0}^\infty \gamma^{t} c_t \biggr]$.
\end{defn}

This value function represents the worst-case sum of constraint penalties over the course of an episode with respect to the uncertainty set $\mathcal{P}(s,a)$. We further define several useful operators. The first operator $\sigma^{sup}_{\mathcal{P}(s,a)} : \mathbb{R}^{|S|}\rightarrow \mathbb{R}$ is defined as $\sigma^{sup}_{\mathcal{P}(s,a)}v=\sup\{p^\top v | p \in \mathcal{P}(s,a)\}$. In addition, we define an operator on vectors for some policy $\pi$ as $\sigma^{sup}_{\pi}:\mathbb{R}^{|S|}\rightarrow \mathbb{R}^{|S|}$ where $\{ \sigma^{sup}_{\pi} v \}(s) = \sigma^{sup}_{\mathcal{P}(s,\pi(s))}v$. Then, we can defined the Supremum Bellman operator $T_{sup}^\pi: \mathcal{R}^{|S|} \rightarrow \mathcal{R}^{|S|}$ as follows $T^{\pi}_{sup} V^\pi = r^\pi + \gamma \sigma^{\sup}_\pi V^\pi$. Note that this operator is a contraction since we get the same result if we replace $T_{inf}^\pi$ with $T_{sup}^\pi$ and replace $V$ with $-V$. An alternative derivation of the sup operator contraction is given in the Appendix, Section \ref{app:sup} for completeness.

\subsubsection{Robust Return Robust Constraint (R3C) Objective}
The R3C objective is defined as:

\begin{eqnarray}
    &\max_{\pi \in \Pi} \inf_{p\in P} \mathbb{E}^{p, \pi}\biggl[\sum_t \gamma^t r(s_t,a_t) \biggr]& \nonumber\\ 
    &\text{ s.t.} \sup_{p'\in \mathcal{P}}\mathbb{E}^{p',\pi}\biggl[\sum_t \gamma^t c(s_t,a_t) \biggr] \leq \beta &\nonumber \\
    \label{eq:maineq}
\end{eqnarray}

Note, a couple of interesting properties about this objective: (1) it focuses on being robust with respect to the return for a pre-defined set of perturbations; (2) the objective also attempts to be robust with respect to the worst case constraint value for the perturbation set. The Lagrangian relaxation form of \eqref{eq:maineq} is used to define an R3C value function.

\begin{defn}[R3C Value Function]
For a fixed $\lambda$, and using the above-mentioned rectangularity assumption \citep{iyengar2005robust}, the R3C value  function for a policy $\pi$ is defined as the concatenation of two value functions ${\mathbf{V}^\pi=f(\langle V^\pi, V^\pi_C \rangle) = V^\pi - \lambda V_C^{\pi}}$. This implies that we keep two separate estimates of $V^\pi$ and $V^\pi_C$ and combine them together to yield $\mathbf{V}^\pi$. The constraint threshold $\beta$ term offsets the value function, and has no effect on any policy improvement step\footnote{The $\beta$ term is only used in the Lagrange update in Lemma 1.}. As a result, the dependency on $\beta$ is dropped. 
\label{def:r3cval}
\end{defn}

% For a given state $s$, we define $\mathbf{V}^{\pi}(s) =  V^\pi(s) - \lambda V_C^{\pi}(s) = \mathbf{r}(s,\pi(s)) + \gamma \biggl[ \sigma^{inf}_{\mathcal{P}(s,\pi(s))} V^{\pi}  -   \lambda   \sigma^{sup}_{\mathcal{P}(s,\pi(s))} V^{\pi}_C \biggr]$ where $\mathbf{r}(s, \pi(s)) = r(s,\pi(s)) - \lambda c(s,\pi(s))$. The derivation can be found in the Appendix, \ref{app:r3c_value}. 

% \begin{defn}[R3C Value Function]
% For a fixed $\lambda$, and using the above-mentioned rectangularity assumption \citep{iyengar2005robust}, the R3C value  function $\mathbf{V}:S \rightarrow \mathbb{R}$ can be defined as ${\mathbf{V}^{\pi}(s) =  V^\pi(s) - \lambda V_C^{\pi}(s) = \mathbf{r}(s,\pi(s)) + \gamma \biggl[ \sigma^{inf}_{\mathcal{P}(s,\pi(s))} V^{\pi}  -   \lambda   \sigma^{sup}_{\mathcal{P}(s,\pi(s))} V^{\pi}_C \biggr]}$ where $\mathbf{r}(s, \pi(s)) = r(s,\pi(s)) - \lambda c(s,\pi(s))$. The derivation can be found in the Appendix, \ref{app:r3c_value}. The constraint threshold $\beta$ term offsets the value function, and has no effect on any policy improvement step\footnote{The $\beta$ term is only used in the Lagrange update in Lemma 4.}. As a result, the dependency on $\beta$ is dropped. 
% \end{defn}
% \begin{equation*}
% \mathbf{V}^{\pi}(s) =  \inf_{p \in P(s,\pi(s))} \mathbb{E}^{p, \pi} \biggl[r(s,\pi(s)) + \gamma V^{\pi}(s') \biggr] - \lambda \biggl[ \sup_{p' \in P} \mathbb{E}^{p', \pi} \biggl[c(s,\pi(s)) + \gamma V^{\pi}_C(s') \biggr] - \beta \biggr] 
% \end{equation*}

The next step is to define the R3C Bellman operator. This is presented in Definition \ref{def:bellman}.

\begin{defn}[R3C Bellman operator]
The R3C Bellman operator is defined as two separate Bellman operators $T^{\pi}_{R3C}=\langle T^{\pi}_{inf}, T^{\pi}_{sup} \rangle$  where $T^{\pi}_{inf}$ is the robust Bellman operator \citep{iyengar2005robust} and $T^{\pi}_{sup}:\mathbb{R}^{|S|}\rightarrow \mathbb{R}^{|S|}$ is defined as the $\sup$ Bellman operator. Based on this definition, applying the R3C Bellman operator to $\mathbf{V} = \langle V, V_C \rangle$ involves applying each of the Bellman operators to their respective value functions. That is, $T^{\pi}_{R3C}\mathbf{V} = T^{\pi}_{inf}V - \lambda T^{\pi}_{sup}V_C$.
\label{def:bellman}
\end{defn}

\begin{theorem}
Given an arbitrary return value function $V:S\rightarrow \mathbb{R}$ and an arbitrary constraint value function $V_C:S\rightarrow \mathbb{R}$, the R3C Bellman operator $\mathcal{T}^{\pi}_{R3C}:\mathbb{R}^{|S|} \rightarrow \mathbb{R}^{|S|}$ when applied iteratively to $\mathbf{V}=\langle V, V_C \rangle$ converges to a fixed point. That is, ${T^\pi_{R3C} \mathbf{V}^{\pi} = \mathbf{V}^{\pi} =\langle V^\pi, V_C^\pi \rangle}$.
\end{theorem}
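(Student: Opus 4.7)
The plan is to exploit the product structure of $T^{\pi}_{R3C}$: since it acts on the pair $\langle V, V_C \rangle$ componentwise, applying $T^{\pi}_{inf}$ to the first coordinate and $T^{\pi}_{sup}$ to the second, convergence of the joint iteration reduces to convergence of each component iteration separately. First I would equip $\mathbb{R}^{|S|} \times \mathbb{R}^{|S|}$ with the product sup-norm $\|\langle u,v\rangle\| = \max\{\|u\|_\infty, \|v\|_\infty\}$, under which a product operator whose two components are each $\gamma$-contractions in the sup-norm is itself a $\gamma$-contraction.

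Next I would argue that each component is indeed a $\gamma$-contraction. For $T^{\pi}_{inf}$, this is precisely the robust Bellman contraction of \citet{iyengar2005robust} under the rectangularity assumption recalled in Section~\ref{sec:background}, so I would simply cite it. For $T^{\pi}_{sup}$, I would invoke the observation made just before the theorem statement: replacing $V$ by $-V$ turns $\sigma^{sup}_{\mathcal{P}(s,a)}$ into $-\sigma^{inf}_{\mathcal{P}(s,a)}$ (since $\sup_p p^\top V = -\inf_p p^\top(-V)$), so the contraction of $T^{\pi}_{inf}$ transfers to $T^{\pi}_{sup}$ with the same modulus $\gamma$. The appendix derivation in Section~\ref{app:sup} can be cited as an alternative.

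With contractions in hand on each coordinate, the Banach fixed point theorem yields a unique fixed point for each component, hence a unique joint fixed point in the product space, to which iteration from any initial $\langle V, V_C \rangle$ converges geometrically at rate $\gamma$. It remains to identify the two fixed points with $V^\pi$ and $V_C^\pi$ respectively. Expanding the fixed-point equation $T^{\pi}_{inf} V = V$ gives $V(s) = r(s,\pi(s)) + \gamma \inf_{p \in \mathcal{P}(s,\pi(s))} \mathbb{E}^p[V(s') \mid s, \pi(s)]$, which is exactly the defining recursion of the robust value $V^\pi$ from Section~\ref{sec:background}; and $T^{\pi}_{sup} V_C = V_C$ gives the analogous recursion with reward replaced by cost and infimum replaced by supremum, which is exactly the defining recursion of the robust constrained value function $V_C^\pi$ from Definition~1. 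Therefore the joint fixed point is $\langle V^\pi, V_C^\pi \rangle$ as claimed.

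The main obstacle, such as it is, is the contraction of $T^{\pi}_{sup}$; the inf-side is inherited directly from Iyengar, and once both components are shown to be contractions the product-space argument is essentially bookkeeping. I would therefore concentrate the bulk of the write-up on the sup-side contraction (making the $V \mapsto -V$ substitution rigorous, or reproducing the appendix argument) and only briefly verify that the product-norm decoupling is clean and that the two fixed-point recursions indeed coincide with the value functions defined earlier in the paper.
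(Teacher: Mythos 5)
Your proposal is correct and follows essentially the same route as the paper: cite \citet{iyengar2005robust}/\citet{tamar2014scaling} for the contraction of $T^{\pi}_{inf}$, transfer it to $T^{\pi}_{sup}$ via the $V \mapsto -V$ substitution (or the appendix argument), and conclude componentwise convergence of the pair $\langle V, V_C\rangle$. If anything, your explicit use of the product sup-norm $\max\{\|u\|_\infty,\|v\|_\infty\}$ to make the joint operator a single $\gamma$-contraction, together with the identification of the two fixed points with $V^\pi$ and $V_C^\pi$, supplies precisely the step the paper waves through as ``by construction'' (and which its own appendix concedes still needs an argument), so your write-up is the more complete of the two.
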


\begin{proof}
It has been previously shown that $T^{\pi}_{inf}$ is a contraction with respect to the max norm \citep{tamar2014scaling} and therefore converges to a fixed point. We also provided an argument whereby $T^{\pi}_{sup}$ is a contraction operator in the previous section as well as in Appendix, \ref{app:sup}. These Bellman operators individually ensure that the robust value function $V(s)$ and the constraint value function $V_C(s)$ converge to fixed points. Therefore, $\mathcal{T}^{\pi}_{R3C}\mathbf{V}$ also converges to a fixed point by construction. 
\end{proof}
% \color{red}
% We show that the  operator (which is a combination of two sub-operators) is indeed a contraction in Theorem \ref{thm:r3c}. We provide the proof in Appendix~\ref{app:r3c}.

% \begin{theorem}[R3C Bellman operator contraction]
% For two arbitrary R3C value functions $\mathbf{U}:S\rightarrow \mathbb{R}$ and $\mathbf{V}:S\rightarrow \mathbb{R}$, we can show that the R3C Bellman operator $\mathcal{T}^{\pi}_{R3C}:\mathbb{R}^{|S|} \rightarrow \mathbb{R}^{|S|}$ is a contraction. That is $\Vert \mathcal{T}^{\pi}_{R3C} \mathbf{U} - \mathcal{T}^{\pi}_{R3C} \mathbf{V} \Vert_\infty \leq \gamma \Vert \mathbf{U} - \mathbf{V} \Vert_\infty$.
% \label{thm:r3c}
% \end{theorem}
% \color{black}

As a result of the above argument, we know that we can apply the R3C Bellman operator in value iteration or policy iteration algorithms in the policy evaluation step. In practice we simultaneously learn estimates of both the robust value function $V^\pi(s)$ and the constraint value function $V_C^\pi(s)$ and combine these estimates to yield $\mathbf{V}^\pi(s)$. 

\textit{It is useful to note that this structure allows for a flexible framework which can define an objective using different combinations of $\sup$ and $\inf$ terms, yielding combined Bellman operators that are contraction mappings.} It is also possible to take the mean with respect to the uncertainty set yielding a soft-robust update \citep{derman2018soft,Mankowitz2019}. We do not derive all of the possible combinations of objectives in this paper, but note that the framework provides the flexibility to incorporate each of these objectives. We next define the RC objective.

\subsubsection{Robust Constrained (RC) Objective}
The RC objective focuses on being robust with respect to constraint satisfaction and is defined as:

\begin{eqnarray}
    &\max_{\pi \in \Pi} \mathbb{E}^{\pi,p}\biggl[\sum_t \gamma^t r(s_t,a_t) \biggr] & \nonumber\\  
    &\text{ s.t. } \sup_{p'\in \mathcal{P}}\mathbb{E}^{p',\pi}\biggl[\sum \gamma^t c(s_t,a_t) \biggr] \leq \beta & \nonumber\\
\end{eqnarray}

This objective differs from R3C in that it only focuses on being robust with respect to constraint satisfaction. This is especially useful in domains where perturbations are expected to have a significantly larger effect on constraint satisfaction than on the return. The corresponding value function is defined as in Definition \ref{def:r3cval}, except by replacing the robust value function in the concatenation with the expected value function $V^{\pi,p}$. The Bellman operator is also similar to Definition \ref{def:bellman}, where the expected return Bellman operator $T^\pi$ replaces $T^\pi_{\inf}$.

% \begin{defn}[RC Value Function]
% For a fixed $\lambda$, and using the above-mentioned rectangularity assumption \citep{iyengar2005robust}, the robust constrained value function $\mathbf{V}:S \rightarrow \mathbb{R}$ can be defined as $\mathbf{V}^{\pi}= V^{\pi,p} - \lambda V^{\pi}_C$. Here, $V^{\pi,p}$ is the expected non-robust return for a policy $\pi$ and transition model $p$.
% \end{defn}
% % \mathbb{E}^{\pi} \biggl[r(s,\pi(s)) + \gamma V^{\pi}(s') \biggr] - \lambda \biggl[ \sup_{p \in P} \mathbb{E}^{p, \pi} \biggl[c(s,\pi(s)) + \gamma V^{\pi}_C(s') \biggr] \biggr] 

% \color{red}
% The RC Bellman operator is defined as below.

% \begin{defn}[Robust Constrained Bellman operator]
% The Robust Constrained (RC) Bellman operator is defined as: $T^{\pi}_{RC} \mathbf{V} =  \mathbf{r}^\pi +  \gamma \biggl[ V^p  -   \lambda   \sigma^{sup}_{\pi} V_C \biggr]$, where $V^p$. 
% \end{defn}

% Using similar arguments as before, it can be shown that the RC Bellman operator is a contraction. The full derivation is in the Appendix, \ref{app:rc}. 

% \begin{theorem}[RC Bellman operator contraction]
% For two arbitrary value functions $\mathbf{U}:S\rightarrow \mathbb{R}$ and $\mathbf{V}:S\rightarrow \mathbb{R}$, we can show that the robust constrained Bellman operator $\mathcal{T}^{\pi}_{RC}:\mathbb{R}^{|S|} \rightarrow \mathbb{R}^{|S|}$ is a contraction. That is $\Vert \mathcal{T}^{\pi}_{RC} \mathbf{U} - \mathcal{T}^{\pi}_{RC} \mathbf{V} \Vert_\infty \leq \gamma \Vert \mathbf{U} - \mathbf{V} \Vert_\infty$.
% \end{theorem}
% \color{black}

\subsection{Lagrange update}
For both objectives, we need to learn a policy that maximizes the return while satisfying the constraint. This involves performing alternating optimization on the Lagrange relaxation of the objective. The optimization procedure alternates between updating the actor/critic parameters and the Lagrange multiplier. For both objectives we have the same gradient update for the Lagrange multiplier:

\begin{lemma}[Lagrange derivative]
The gradient of the Lagrange multiplier $\lambda$ is ${    \frac{\partial}{\partial \lambda} f =  - \biggl( \sup_{p\in \mathcal{P}}\mathbb{E}^{p,\pi}\biggl[\sum_t \gamma^t c(s_t,a_t) \biggr] - \beta \biggr)}$, 
where $f$ is the R3C or RC objective loss.
\end{lemma}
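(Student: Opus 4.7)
The plan is to write the Lagrangian relaxations of the R3C and RC objectives explicitly, and then simply take the partial derivative with respect to $\lambda$. For R3C, the Lagrangian (which is what the lemma refers to as $f$) is
\begin{equation*}
f_{R3C}(\pi, \lambda) \;=\; \inf_{p \in \mathcal{P}} \mathbb{E}^{p,\pi}\!\left[\sum_t \gamma^t r(s_t,a_t)\right] \;-\; \lambda \left( \sup_{p' \in \mathcal{P}} \mathbb{E}^{p',\pi}\!\left[\sum_t \gamma^t c(s_t,a_t)\right] - \beta \right),
\end{equation*}
and for RC the first term is replaced by the non-robust expectation $\mathbb{E}^{p,\pi}[\sum_t \gamma^t r(s_t,a_t)]$, while the constraint term is the same.

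Next, I would observe that in both cases the return term (whether or not wrapped by $\inf_p$) depends only on $\pi$ and on the choice of transition model, and is entirely independent of $\lambda$. Similarly, the constant $\beta$ and the supremum $\sup_{p'\in\mathcal{P}} \mathbb{E}^{p',\pi}[\sum_t \gamma^t c(s_t,a_t)]$ do not depend on $\lambda$, since the uncertainty set $\mathcal{P}$ is fixed and the expectation is taken under $\pi$ and $p'$ only. Thus the Lagrangian is linear in $\lambda$ with coefficient equal to $-(\sup_{p' \in \mathcal{P}} \mathbb{E}^{p',\pi}[\sum_t \gamma^t c(s_t,a_t)] - \beta)$.

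Applying elementary single-variable calculus then yields
\begin{equation*}
\frac{\partial f}{\partial \lambda} = -\left( \sup_{p \in \mathcal{P}} \mathbb{E}^{p,\pi}\!\left[\sum_t \gamma^t c(s_t,a_t)\right] - \beta \right),
\end{equation*}
in both the R3C and RC cases, which is exactly the claim. There is essentially no obstacle here: the only mild subtlety is to note that although the constraint term involves a supremum over the (possibly uncountable) uncertainty set $\mathcal{P}$, this supremum is a function of $\pi$ alone and can therefore be treated as a constant with respect to $\lambda$, so that the derivative passes through without any envelope-theorem considerations. The result follows.
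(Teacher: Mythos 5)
Your proposal is correct and matches the paper's own proof, which likewise writes out the Lagrangian relaxation $\inf_{p\in \mathcal{P}} \mathbb{E}^{p,\pi}[\sum_t \gamma^t r(s_t,a_t)] - \lambda\bigl(\sup_{p'\in\mathcal{P}}\mathbb{E}^{p',\pi}[\sum_t \gamma^t c(s_t,a_t)] - \beta\bigr)$ and differentiates directly in $\lambda$. Your added remark that the supremum term is $\lambda$-independent (so no envelope-theorem issue arises) is a small but welcome clarification the paper leaves implicit.
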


This is an intuitive update in that the Lagrange multiplier is updated using the worst-case constraint violation estimate. If the worst-case estimate is larger than $\beta$, then the Lagrange multiplier is increased to add more weight to constraint satisfaction and vice versa. %This would arguably lead to an overall more conservative policy which would be consistent with previous work \citep{Mankowitz2019}. 

\section{Robust Constrained Policy Evaluation}
We now describe how the R3C Bellman operator can be used to perform policy evaluation. This policy evaluation step can be incorporated into any actor-critic algorithm. Instead of optimizing the regular distributional loss (e.g. the C51 loss in~\citet{bellemare2017distributional}), as regular D4PG and DMPO do, we optimize the worst-case distributional loss, which is the distance: $d \biggl(\mathbf{r_t} + \gamma \mathbf{V}_{\hat{\theta}}^{\pi_{k}}(s_{t+1}), \mathbf{V}_{\theta}^{\pi_{k}}(s_t) \biggr)$, 
%
% \inf_{p \in \mathcal{P}(s_t, a_t)} \biggl[Q_{\hat{\theta}}^{\pi_{k}}(s_{t+1} \sim p(\cdot | s_t, a_t), a_{t+1}\sim \pi_{k}( \cdot \vert s_{t+1}))  \biggr
% -\lambda \sup_{p \in \mathcal{P}(s_t, a_t)} \biggl[Q_{C, \hat{\theta}}^{\pi_{k}}(s_{t+1} \sim p(\cdot | s_t, a_t), a_{t+1}\sim \pi_{k}( \cdot \vert s_{t+1}))
%
%\nir{too many appearances of $p$, let's talk about it - in general i don't like $\mathcal{P}(s,a)$, and think we should just do $\mathcal{P}$.} - we can talk about it but it is notationally accurate.
where:
\begin{align}
\mathbf{V}_{\theta}^{\pi_{k}}(s_{t})=  \inf_{p \in \mathcal{P}(s_t, \pi(s_t))}& \biggl[V_{\theta}^{\pi_{k}}(s_{t+1} \sim p(\cdot | s_t, \pi(s_t)))  \biggr]\nonumber \\
                                                              -\lambda \sup_{p' \in \mathcal{P}(s_t, \pi(s_t))}& \biggl[V_{C,\theta}^{\pi_{k}}(s_{t+1} \sim p'(\cdot | s_t, \pi(s_t)))\biggr];
\end{align}

% where $\mathbf{V}_{\theta}^{\pi_{k}}(s_{t}) = \inf_{p \in \mathcal{P}(s_t, \pi(s_t))} \biggl[V_{\theta}^{\pi_{k}}(s_{t+1} \sim p(\cdot | s_t, \pi(s_t)))  \biggr]
%  -\lambda \sup_{p' \in \mathcal{P}(s_t, \pi(s_t))} \biggl[V_{C,\theta}^{\pi_{k}}(s_{t+1} \sim p'(\cdot | s_t, \pi(s_t)))\biggr]$; 
and $\mathcal{P}(s_t,\pi(s_t))$ is an uncertainty set for the current state $s_t$ and action $a_t$; $\pi_{k}$ is the current network's policy, and $\hat{\theta}$ denotes the target network parameters. The Bellman operators derived in the previous sections are repeatedly applied in this policy evaluation step depending on the optimization objective (e.g., R3C or RC). This would be utilized in the critic updates of D4PG and DMPO. Note that the action value function definition, $\mathbf{Q}_{\theta}^{\pi_{k}}(s_{t}, a_t)$, trivially follows.

% \textbf{Relation to MPO:} In MPO, this replaces the current policy evaluation step. The robust Bellman operator \citep{iyengar2005robust} ensures that this process converges to a unique fixed point for the policy $\pi_k$. This is achieved by repeated application of the robust Bellman operator during the policy evaluation step until convergence to the fixed point. Since the proposal policy $q(a|s)$ (see Section \ref{sec:background}) is proportional to the robust action value estimate $Q^{\pi_k}_\theta(s,a)$, it intuitively yields a robust policy as the policy is being generated from a worst-case value function. The fitting of the policy network to the proposal policy yields a robust network policy $\pi_{k+1}$. 

\section{Experiments} % TODO: fix spacing here.
We perform all experiments using domains from the Real-World Reinforcement Learning (RWRL) suite\footnote{\url{https://github.com/google-research/realworldrl_suite}}, namely {\footnotesize\texttt{cartpole:\{balance, swingup\}}}, {\footnotesize\texttt{walker:\{stand, walk, run\}}}, and {\footnotesize\texttt{quadruped:\{walk, run\}}}. We define a unique task in our experiments as a 6-tuple as seen in Table \ref{tab:canonical_def}. The parameters in the table correspond to the domain name, the variant for that domain (i.e.~RWRL task), the constraint being considered, the safety coefficient value, the constraint threshold and the type of robustness perturbation being applied to the dynamics respectively. In total, we have $6$ unique tasks on which we test our benchmark agents. The full list of tasks can be found in the Appendix, Table \ref{app:full_list}. The available constraints per domain can be found in the Appendix~\ref{app:constraint_defs}.

The baselines used in our paper can be seen in Table~\ref{tab:baselines}. C-ALG refers to the reward constrained, non-robust algorithms of the variants that we have adapted based on~\citep{Tessler2018a,Calian2020}; RC-ALG refers to the robust constraint algorithms corresponding to the Bellman operator $T^{\pi}_{RC}$; R3C-ALG refers to the robust return robust constrained algorithms corresponding to the Bellman operator $T^{\pi}_{R3C}$; SR3C-ALG refers to the soft robust (with respect to return) robust constraint algorithms and R-ALG refers to the robust return algorithms based on~\cite{Mankowitz2019} but with the addition of a constraint in the objective.

% \begin{table*}[h!]
% \centering
% \begin{tabular}{cccccc}
% \toprule
% \textbf{Domain} & \textbf{Domain Variant} & \textbf{Constraint} & \textbf{Safety Coefficient} & \textbf{Threshold} & \textbf{Perturbation} \\
% \midrule
% Cartpole        & Swingup                 & Balance Velocity    & 0.3                         & 0.115              & Pole length \\
% \bottomrule
% \end{tabular}
% \label{tab:tasksdef}
% \caption{An example task description}
% \end{table*}

\begin{table*}[]
\centering
\begin{tabular}{cccccc}
\toprule
\textbf{Domain} & \textbf{Domain Variant} & \textbf{Constraint} & \textbf{Safety Coefficient} & \textbf{Threshold} & \textbf{Perturbation} \\
\midrule
Cartpole        & Swingup                 & Balance Velocity    & 0.3                         & 0.115              & Pole length\\          
\bottomrule
\end{tabular}
\caption{An example task definition.}
\label{tab:canonical_def}
\end{table*}

\begin{table*}[h!]
\centering
\begin{tabular}{lll}
\toprule
{Baseline Algorithm} & {Variants}                                      & {Baseline Description}           \\
\midrule
{C-ALG}              & C-D4PG, C-DMPO                               & Constraint aware, non-robust.\\ 
{RC-ALG}             & RC-D4PG, RC-DMPO                            & Robust constraint.\\ 
{R3C-ALG}            & R3C-D4PG, R3C-DMPO                         & Robust return robust constraint.\\ 
{R-ALG}              & R-D4PG, R-DMPO                               & Robust return, constraint aware.\\ 
{SR3C-ALG}           & SR3C-D4PG & Soft robust return, robust constraint.\\ 
\bottomrule
\end{tabular}
\caption{The baseline algorithms used in this work.}
\label{tab:baselines}
\end{table*}

\begin{figure*}[!h]
    \centering
        \includegraphics[width=0.43\textwidth]{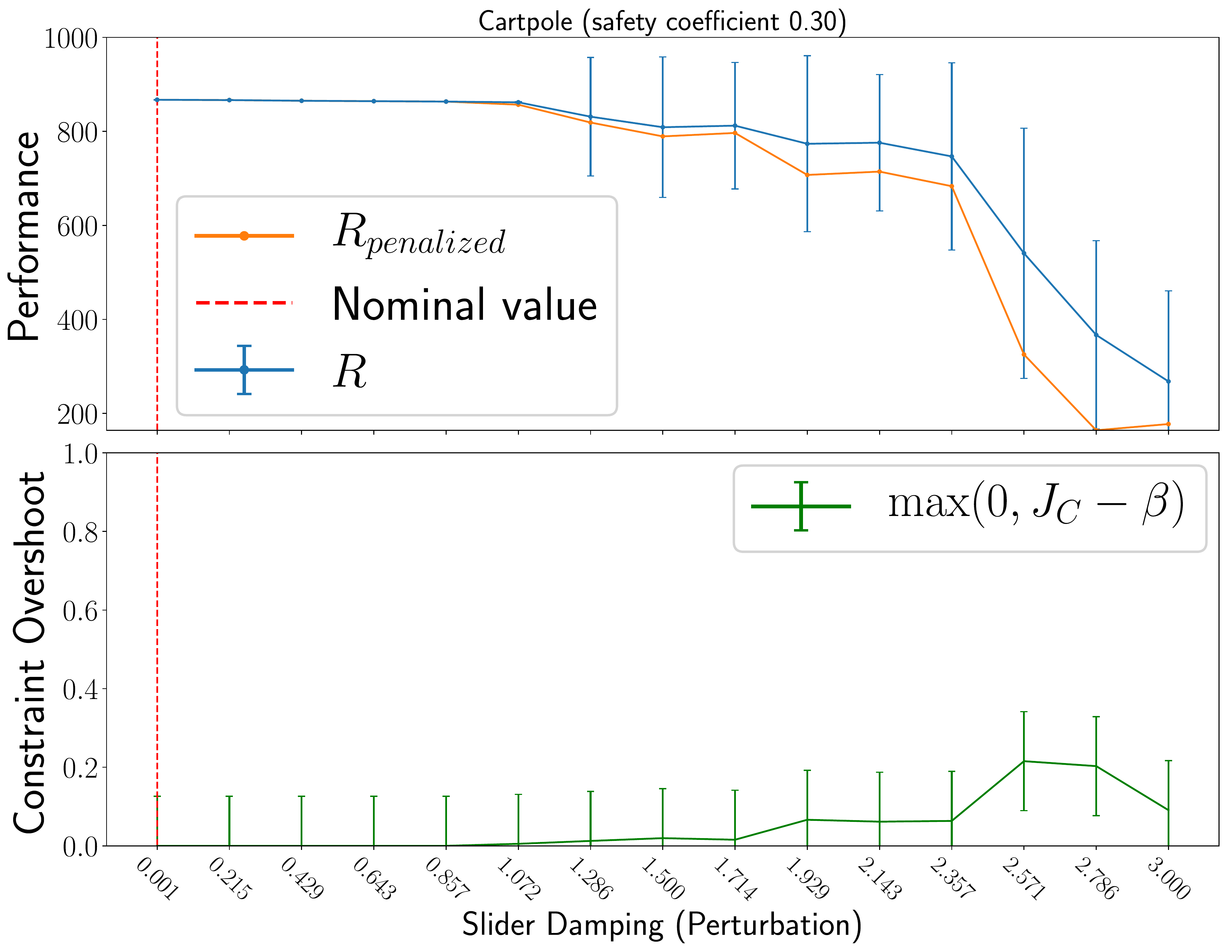}
        \includegraphics[width=0.43\textwidth]{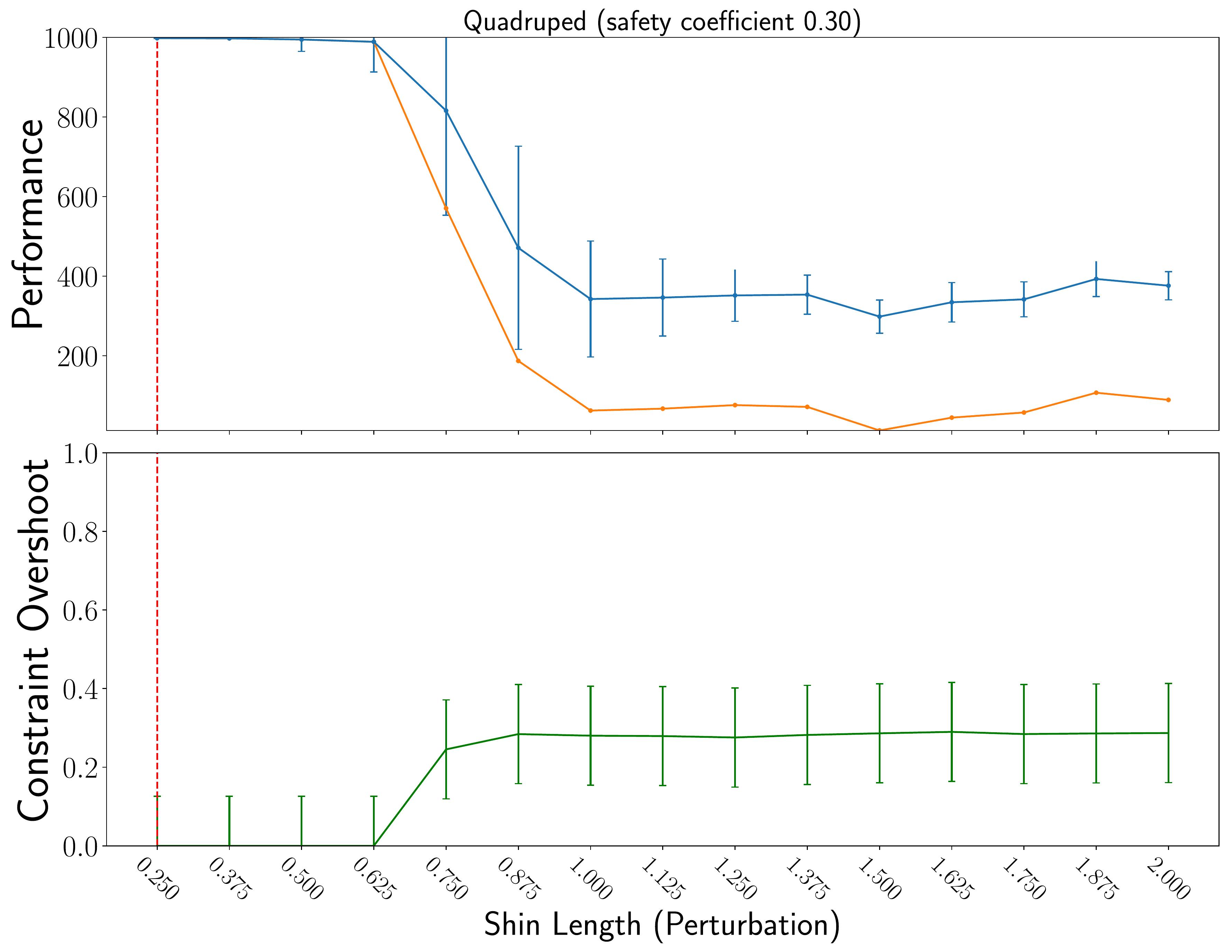}
        \includegraphics[width=0.43\textwidth]{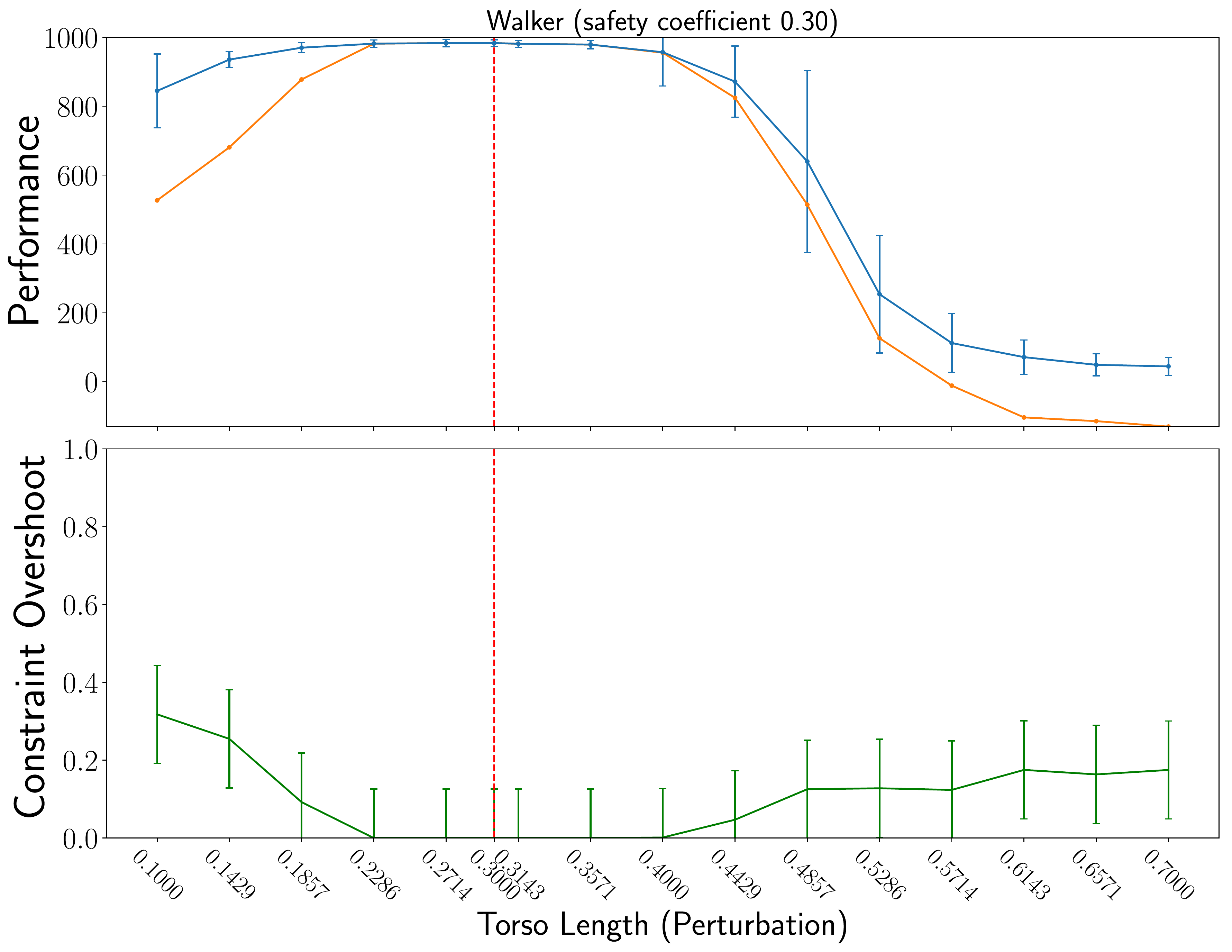}
    \caption{The effect on constraint satisfaction and return as perturbations are added to \tasktext{cartpole}, \tasktext{quadruped} and \tasktext{walker} for a fixed C-D4PG policy.}
    \label{fig:sensitivity_main_paper}    
\end{figure*}
\subsection{Experimental Setup}
For each task, the action and observation dimensions are shown in the Appendix, Table~\ref{tab:state_dimensions}. The length of an episode is $1000$ steps and the upper bound on reward is $1000$ \citep{Tassa2018}. All the network architectures are the same per algorithm and approximately the same across algorithms in terms of the layers and the number of parameters. A full list of all the network architecture details can be found in the Appendix, Table~\ref{tab:d4pg_hyperparameters}. All runs are averaged across $5$ seeds.

\textbf{Metrics}: We use three metrics to track overall performance, namely: \textbf{return} $R$, \textbf{overshoot} $\psi_{\beta,C}$ and \textbf{penalized return} $R_{penalized}$. The return is the sum of rewards the agent receives over the course of an episode. The constraint overshoot $\psi_{\beta,C}=\max(0, J_C^{\pi} - \beta)$ is defined as the clipped difference between the average costs over the course of an episode $J_C^{\pi}$ and the constraint threshold $\beta$. The penalized return is defined as $R_{penalized}=R - \bar{\lambda} \psi_{\beta,C}$ where $\bar{\lambda}=1000$ is an evaluation weight and equally trades off return with constraint overshoot $\psi_{\beta,C}$. 

\textbf{Constraint Experiment Setup}: The \textit{safety coefficient} is a flag in the RWRL suite~\citep{dulac2020empirical} that determines how easy/difficult it is in the environment to violate constraints. The safety coefficient values range from $0.0$ (easy to violate constraints) to $1.0$ (hard to violate constraints). As such we selected for each task (1) a safety coefficient of $0.3$; (2) a particular constraint supported by the RWRL suite and (3) a corresponding constraint threshold $\beta$, which ensures that the agent can find feasible solutions (i.e., satisfy constraints) and solve the task. %The full list of thresholds per domain can be found in the Appendix, Table \ref{tab:thresholds}.

\paragraph{Robustness Experimental Setup:} The robust/soft-robust agents (R3C and RC variants) are trained using a pre-defined uncertainty set consisting of $3$ task perturbations (this is based on the results from \cite{Mankowitz2019}). Each perturbation is a different instantiation of the Mujoco environment. The agent is then evaluated on a set of $9$ hold-out task perturbations ($10$ for \tasktext{quadruped}). For example, if the task is as defined in Table \ref{tab:canonical_def}, then the agent will have three pre-defined pole length perturbations for training, and evaluate on nine unseen pole lengths, while trying to satisfy the balance velocity constraint.

\textbf{Training Procedure}: All agents are always acting on the unperturbed environment. This corresponds to the default environment in the \mbox{dm\_control} suite~\citep{Tassa2018} and is referred to in the experiments as the \textit{nominal} environment. When the agent acts, it generates next state realizations for the nominal environment as well as each of the perturbed environments in the training uncertainty set to generate the tuple $\langle s,a,r, [s', s'_1, s'_2 \cdots s'_N] \rangle$ where N is the number of environments in the training uncertainty set and $s'_i$ is the next state realization corresponding to the $i^{th}$ perturbed training environment. Since the robustness update is incorporated into the policy evaluation stage of each algorithm, the critic loss which corresponds to the TD error in each case is modified as follows: when computing the target, the learner samples a tuple $\langle s,a,r, [s', s'_1, s'_2 \cdots s'_N] \rangle$ from the experience replay. The target action value function for each next state transition $[s', s'_1, s'_2 \cdots s'_N]$ is then computed by taking the $\inf$ (robust), average (soft-robust) or the nominal value (non-robust).  
In each case separate action-value functions are trained for the return $Q(s,a)$ and the constraint $Q_C(s,a)$. These value function estimates then individually return the $mean, \inf, \sup$ value, depending on the technique, and are combined to yield the target to compute $\mathbf{Q}(s,a)$. 

The chosen values of the uncertainty set and evaluation set for each domain can be found in Appendix, Table~\ref{app:uncertaintysets}. Note that it is common practice to manually select the pre-defined uncertainty set and the unseen test environments. Practitioners often have significant domain knowledge and can utilize this when choosing the uncertainty set \citep{derman2019,derman2018soft,Dicastro2012,mankowitz2018learning,tamar2014scaling}.
\begin{table*}[]
\centering
\small
\begin{tabular}{ccccccc}
\toprule
\textbf{Task} &\textbf{Domain} & \textbf{Domain Variant} & \textbf{Constraint} & \textbf{Safety Coefficient} & \textbf{Threshold} & \textbf{Perturbation} \\
\midrule
1&Cartpole        & Swingup                 & Balance Velocity    & 0.3                         & 0.115              & Slide damping\\
2 &Cartpole        & Swingup                 & Balance Velocity    & 0.3                         & 0.115              & Pole mass\\
3 & Walker        & Walk                 & Joint Velocity    & 0.3                         & 0.1              & Torso Length\\
4 & Walker        & Walk                 & Joint Velocity    & 0.3                         & 0.1              & Thigh Length\\
\bottomrule
\end{tabular}
\caption{The tasks presented in the experiments.}
\label{tab:main_task_defs}
\end{table*}%
%
% Where there is overlap only
\begin{table*}
\centering
\begin{tabular}{lllcc}
\toprule
 Base &  Algorithm &                 $R$ &  $R_{penalized}$ & $\max(0, J_C^\pi - \beta)$ \\
\midrule
 D4PG &     C-D4PG &  673.21 $\pm$ 93.04 &          491.450 &           0.18 $\pm$ 0.053 \\
  &     R-D4PG &  707.79 $\pm$ 65.00 &          542.022 &           0.17 $\pm$ 0.046 \\
  &   \textbf{R3C-D4PG} &  \textbf{734.45 $\pm$ 77.93} &          \textbf{635.246} &          \textbf{ 0.10 $\pm$ 0.049} \\
  &    RC-D4PG &  684.30 $\pm$ 83.69 &          578.598 &           0.11 $\pm$ 0.050 \\
  &  SR3C-D4PG &  723.11 $\pm$ 84.41 &          601.016 &           0.12 $\pm$ 0.038 \\
 \midrule
  DMPO &      C-DMPO &  598.75 $\pm$ 72.67 &          411.376 &           0.19 $\pm$ 0.049 \\
   &      R-DMPO &  686.13 $\pm$ 86.53 &          499.581 &           0.19 $\pm$ 0.036 \\
   &    \textbf{R3C-DMPO} &  \textbf{752.47 $\pm$ 57.10 }&          \textbf{652.969} &           \textbf{0.10 $\pm$ 0.040} \\
   &     RC-DMPO &  673.98 $\pm$ 80.91 &          555.809 &           0.12 $\pm$ 0.036 \\
\bottomrule
\end{tabular}
\caption{Performance metrics averaged over all holdout sets for all tasks.}
\label{tab:overall}
\end{table*}

\subsection{Main Results}
In the first sub-section we analyze the sensitivity of a fixed constrained policy (trained using C-D4PG) operating in perturbed versions of a given environment. This will help test the hypothesis that perturbing the environment does indeed have an effect on constraint satisfaction as well as on return. In the next sub-section we analyze the performance of the R3C and RC variants with respect to the baseline algorithms. We also investigate the learning performance of each baseline algorithm with respect to sample efficiency and the lagrangian parameter.

\subsubsection{Fixed Policy Sensitivity}
In order to validate the hypothesis that perturbing the environment affects constraint satisfaction and return, we trained a C-D4PG agent to satisfy constraints across $10$ different tasks. In each case, C-D4PG learns to solve the task and satisfy the constraints in expectation. We then perturbed each of the tasks with a supported perturbation and evaluated whether the constraint overshoot increases and the return decreases for the C-D4PG agent. Some example graphs are shown in Figure \ref{fig:sensitivity_main_paper} for the \tasktext{cartpole} (left), \tasktext{quadruped} (middle) and \tasktext{walker} (right) domains. The upper row of graphs contain the return performance (blue curve), the penalized return performance (orange curve) as a function of increased perturbations (x-axis). The vertical red dotted line indicates the nominal model on which the C-D4PG agent was trained. The lower row of graphs contain the constraint overshoot (green curve) as a function of varying perturbations. As seen in the figures, as perturbations increase across each dimension, both the return and penalized return degrades (top row) while the constraint overshoot (bottom row) increases. This provides useful evidence for our hypothesis that constraint satisfaction does indeed suffer as a result of perturbing the environment dynamics. This was consistent among many more settings. The full performance plots can be found in the Appendix, Figures \ref{fig:sensitivity_cartpole}, \ref{fig:sensitivity_quadruped} and \ref{fig:sensitivity_walker} for \tasktext{cartpole}, \tasktext{quadruped} and \tasktext{walker} respectively. 
\subsubsection{Overall performance}
We now compare C-ALG, RC-ALG, R3C-ALG, R-ALG and SR3C-ALG\footnote{We only ran the SR3C-D4PG variant to gain intuition as to soft-robust performance.} across $6$ tasks. The average performance across holdout sets and tasks is shown in Table \ref{tab:overall}. As seen in the table, the R3C-ALG variant outperforms all of the baselines in terms of return and constraint overshoot and therefore obtains the highest penalized return performance. Interestingly, the soft-robust variant yields competitive performance. 

We further analyze the results for three tasks using ALG=DMPO in Figure \ref{fig:robustnessdmpo} and ALG=D4PG in Figure~\ref{fig:robustnessd4pg} respectively. The three tasks are defined in Table \ref{tab:main_task_defs} where Task 1, Task 2 and Task 3 correspond to the left, middle and right columns respectively in each of the above-mentioned figures. Graphs of the additional tasks can be found in the Appendix, Figures \ref{fig:overall_results1} and \ref{fig:overall_results2}. Each graph contains, on the y-axis, the return $R$ (marked by the transparent colors) and the penalized return $R_{penalized}$ (marked by the dark colors superimposed on top of $R$). The x-axis consists of three holdout set environments in increasing order of difficulty from \emph{Holdout 0} to \emph{Holdout 8}. Holdout N corresponds to  perturbation element N for the corresponding task in the Appendix, Table \ref{app:uncertaintysets}. As can be seen for Tasks 1 and 2 (Figure \ref{fig:robustnessd4pg}, R3C-D4PG outperforms the baselines, especially as the perturbations get larger. This can be seen by observing that as the perturbations increase, the penalized return for these techniques is significantly higher than that of the baselines. This implies that the amount of constraint violations is significantly lower for these algorithms resulting in robust constraint satisfaction. Task 3 has similar improved performance over the baseline algorithms.  

\textbf{Soft-robustness}: As mentioned previously, we also ran experiments on a soft-robust variant SR3C-D4PG. This soft-robust objective is less conservative  (e.g., see \citet{derman2018soft}) as it takes the mean with respect to the uncertainty set instead of the infimum/supremum. 
As can be seen in Figure \ref{fig:robustnessd4pg} and Table \ref{tab:overall}, the performance is competitive with that of the robust variants (e.g., RC-D4PG and R3C-D4PG). However, this variant does tend to suffer if the perturbations become too large as can be seen in Figure \ref{fig:robustnessd4pg}(left) and (middle) respectively. This is consistent with previous work \cite{derman2018soft,Mankowitz2019}. If however, a less conservative approach is desirable, and the expected perturbations to the environment are not very large, the soft-robust approach might be a viable candidate algorithm.

\subsubsection{Investigative Studies}
% Learning curves.
\textbf{Learning performance}: We next perform an investigative study into the learning performance of ALG=DMPO on Task $4$ from Table \ref{tab:main_task_defs} (i.e., \tasktext{walker} with thigh length perturbations). 

In Figure~\ref{fig:learning_curves} we plot the learning curves for (1) episode return ($R$) and (2) constraint return ($J_C^\pi$) relative to $\beta$, the constraint threshold. Figures~\ref{fig:learning_curves} (a) and (c) correspond to the return and constraint satisfaction performance for each baseline trained on the nominal model. Figures~\ref{fig:learning_curves} (b) and (d) show the performance of the baselines on a holdout set. 

In this task, both R3C-DMPO and RC-DMPO manage to obtain policies which satisfy the constraint both on the unperturbed and the holdout environments. It is interesting to note, in this example, that the constraints are significantly below the performance threshold. This indicates that the robust variants may find an overly conservative solution. On the other hand, the algorithms that do not optimize for robust constraint satisfaction, namely C-DMPO and R-DMPO, have poor constraint satisfaction performance on the holdout set as seen in Figure~\ref{fig:learning_curves}.

\textbf{Lagrange multiplier learning performance}: We next investigate the Lagrange multiplier ($\lambda$) learning performance for each algorithm shown in Figure~\ref{fig:learning_curves} (e). 

In this example, we can see that both R-DMPO and C-DMPO converge to constraint satisfying policies on the nominal environment and this results in the Lagrange multiplier quickly converging to $0$ as seen in the figure. On the other hand, the robust constraint algorithms, R3C-DMPO and RC-DMPO have noisier lagrange multipliers. One might ask why the constraint is being satisfied in Figure~\ref{fig:learning_curves}(c), yet the Lagrange multiplier is non-zero. This is because the agent is acting only in the \textit{nominal} model and learns to satisfy constraints with respect to this model. This does not mean that the agent necessarily satisfies constraints in the other models in the uncertainty set, as it only has access to next state samples from these sets (rather than full trajectories). This explains why the Lagrange multiplier is non-zero. See Section \ref{app:investigate}, Figure \ref{fig:investigate} (e,g,h) for an example of this. However, as we have shown in the experiments, this still results in robust performance to unseen holdout sets.

% \begin{table*}[]
% \centering
% \begin{tabular}{ccccccc}
% \toprule
% \textbf{Task} &\textbf{Domain} & \textbf{Domain Variant} & \textbf{Constraint} & \textbf{Safety Coefficient} & \textbf{Threshold} & \textbf{Perturbation} \\
% \midrule
% 1&Cartpole        & Swingup                 & Balance Velocity    & 0.3                         & 0.115              & Slide damping\\
% 2 &Cartpole        & Swingup                 & Balance Velocity    & 0.3                         & 0.115              & Pole mass\\
% 3 & Walker        & Walk                 & Joint Velocity    & 0.3                         & 0.1              & Torso Length\\
% \bottomrule
% \end{tabular}
% \caption{The tasks presented in the experiments.}
% \label{tab:main_task_defs}
% \end{table*}

%
\begin{figure*}[!h]
    \centering
        \includegraphics[width=0.49\textwidth]{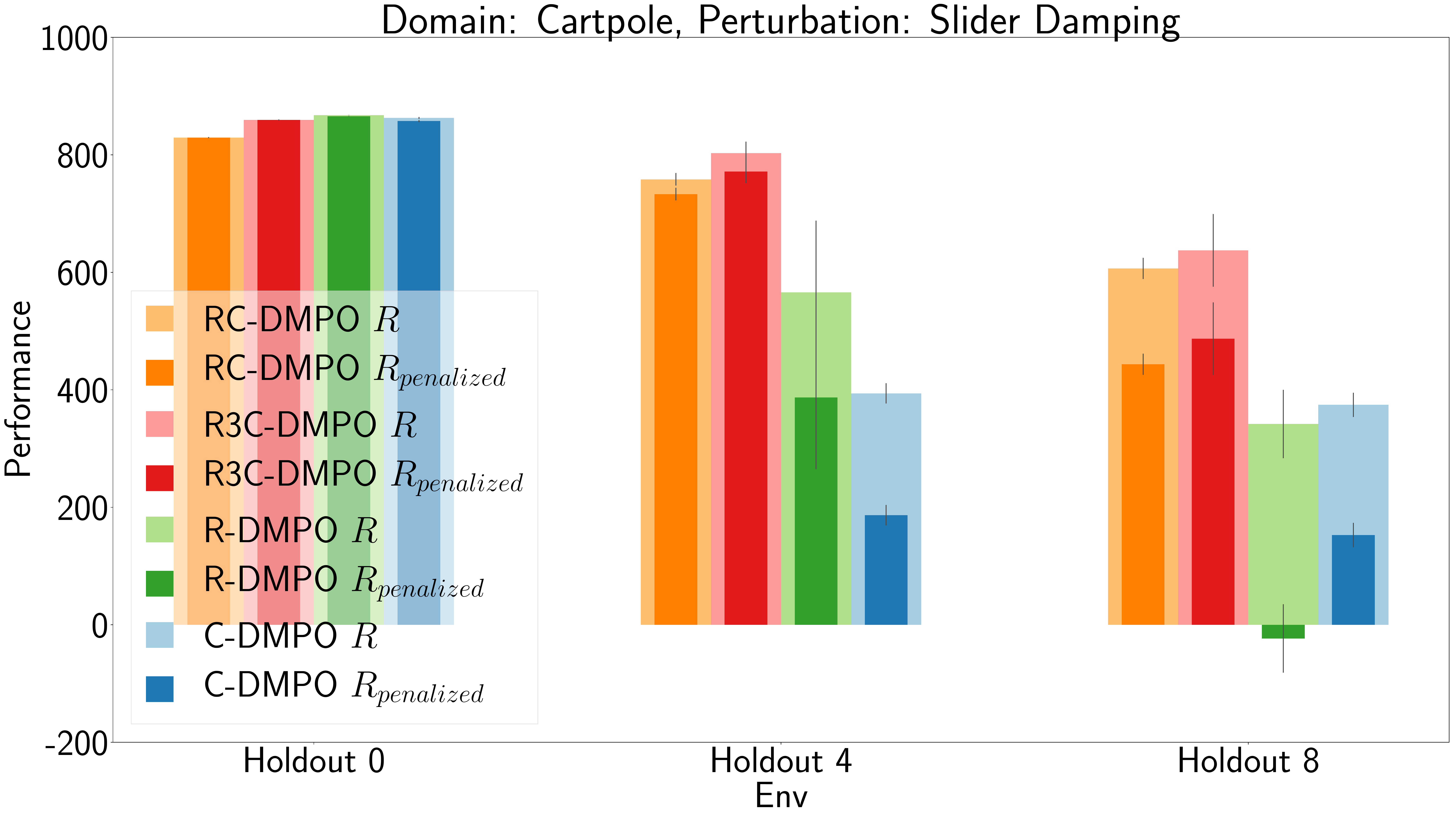}
        \includegraphics[width=0.49\textwidth]{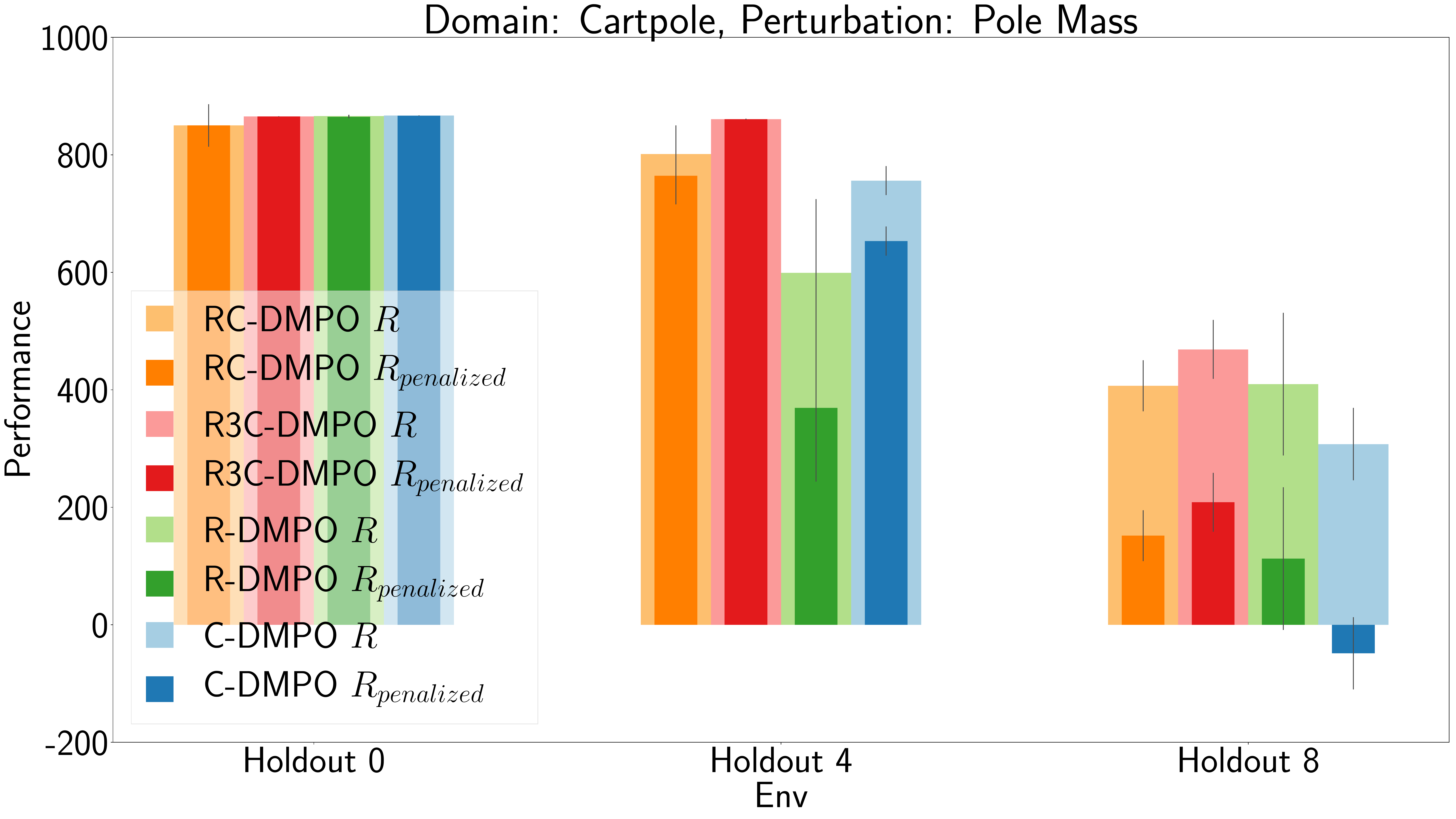}
        \includegraphics[width=0.49\textwidth]{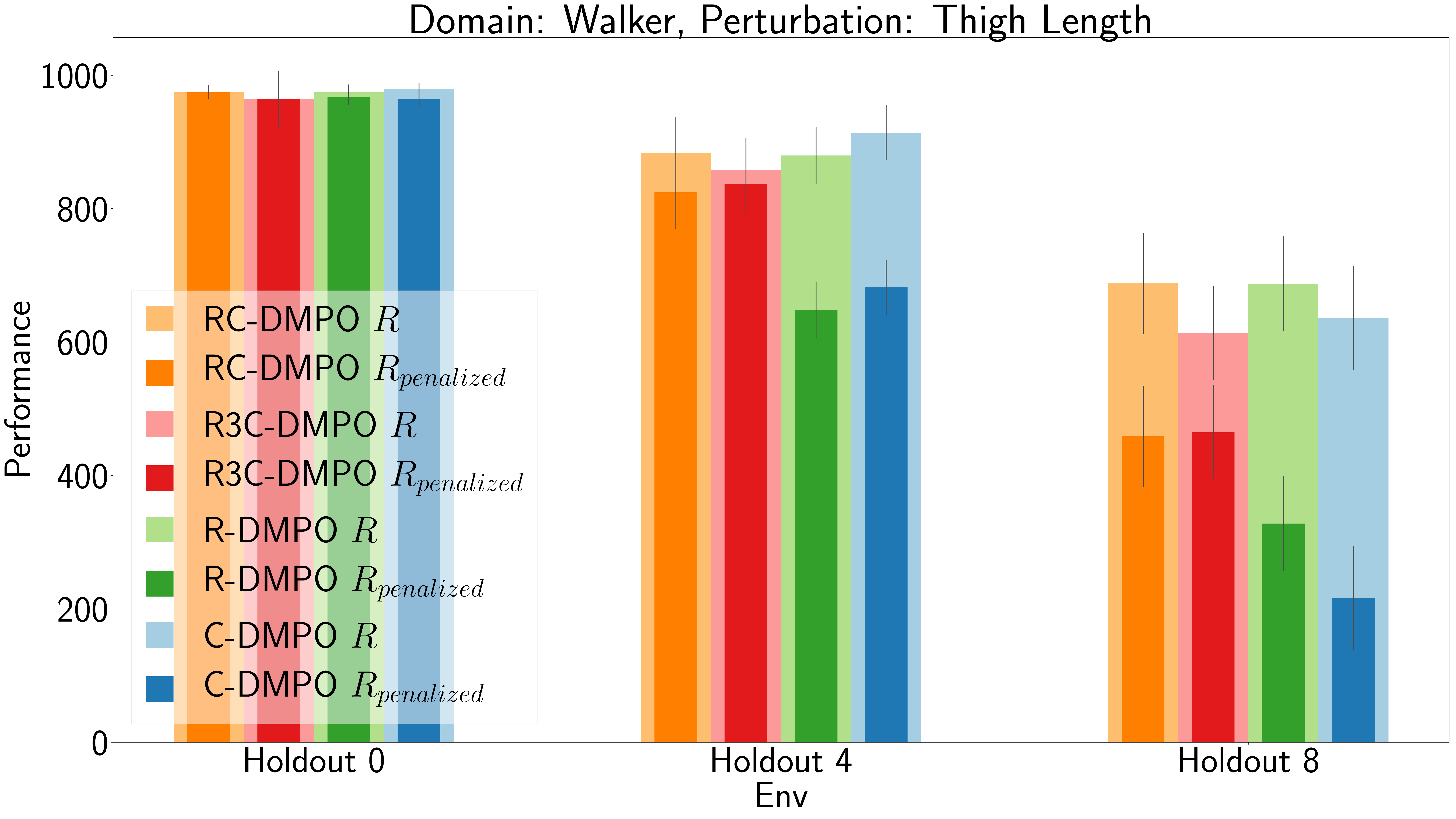}%
        \vspace{-0.25cm}
    \caption{The holdout set performance of the baseline algorithms on DMPO variants for Cartpole with slider damping and pole mass perturbations, and Walker with thigh length perturbations (bottom row).}
    \label{fig:robustnessdmpo}    
\end{figure*}

\begin{figure*}[!h]
    \centering
        \includegraphics[width=0.49\textwidth]{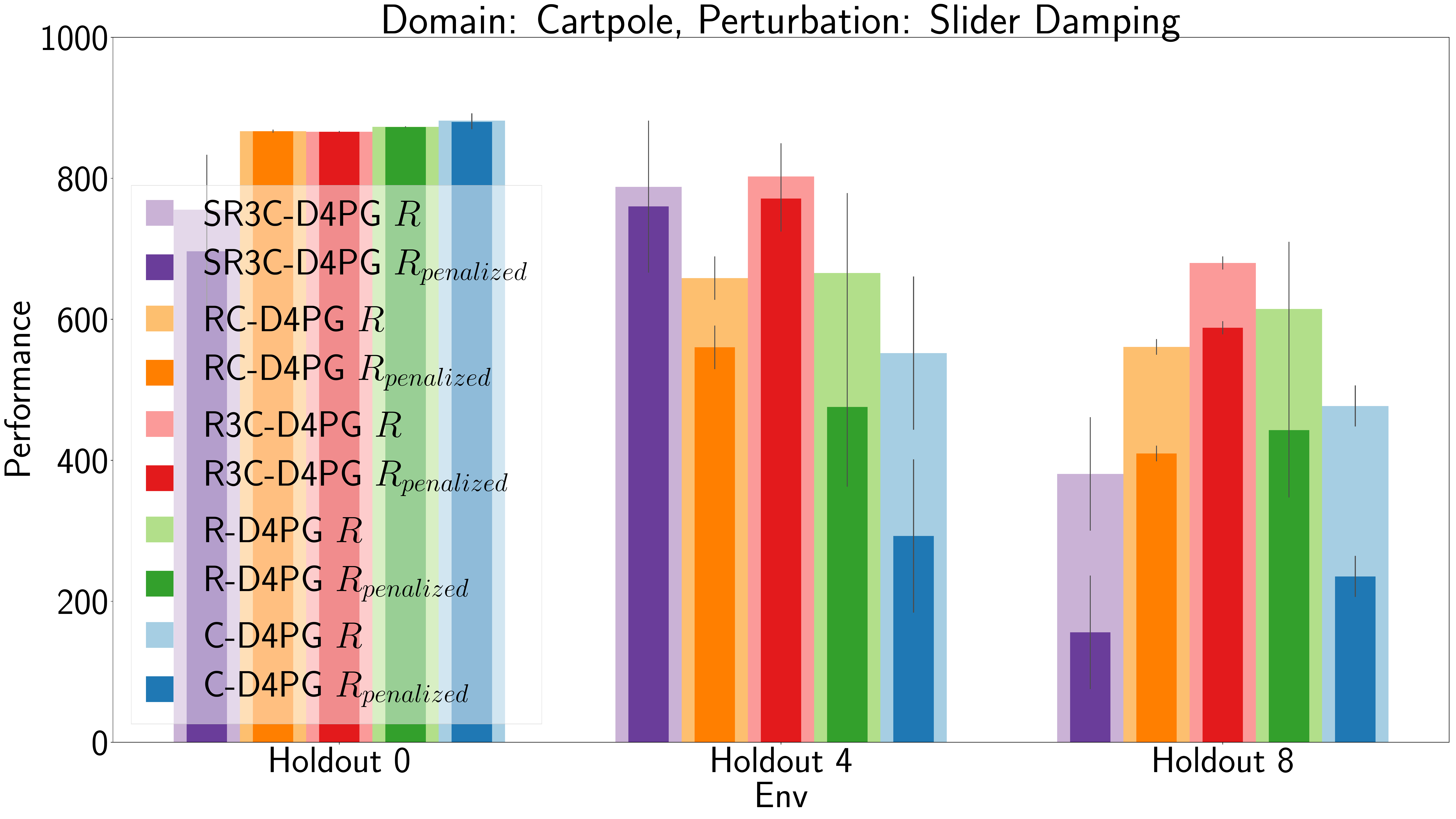}
        \includegraphics[width=0.49\textwidth]{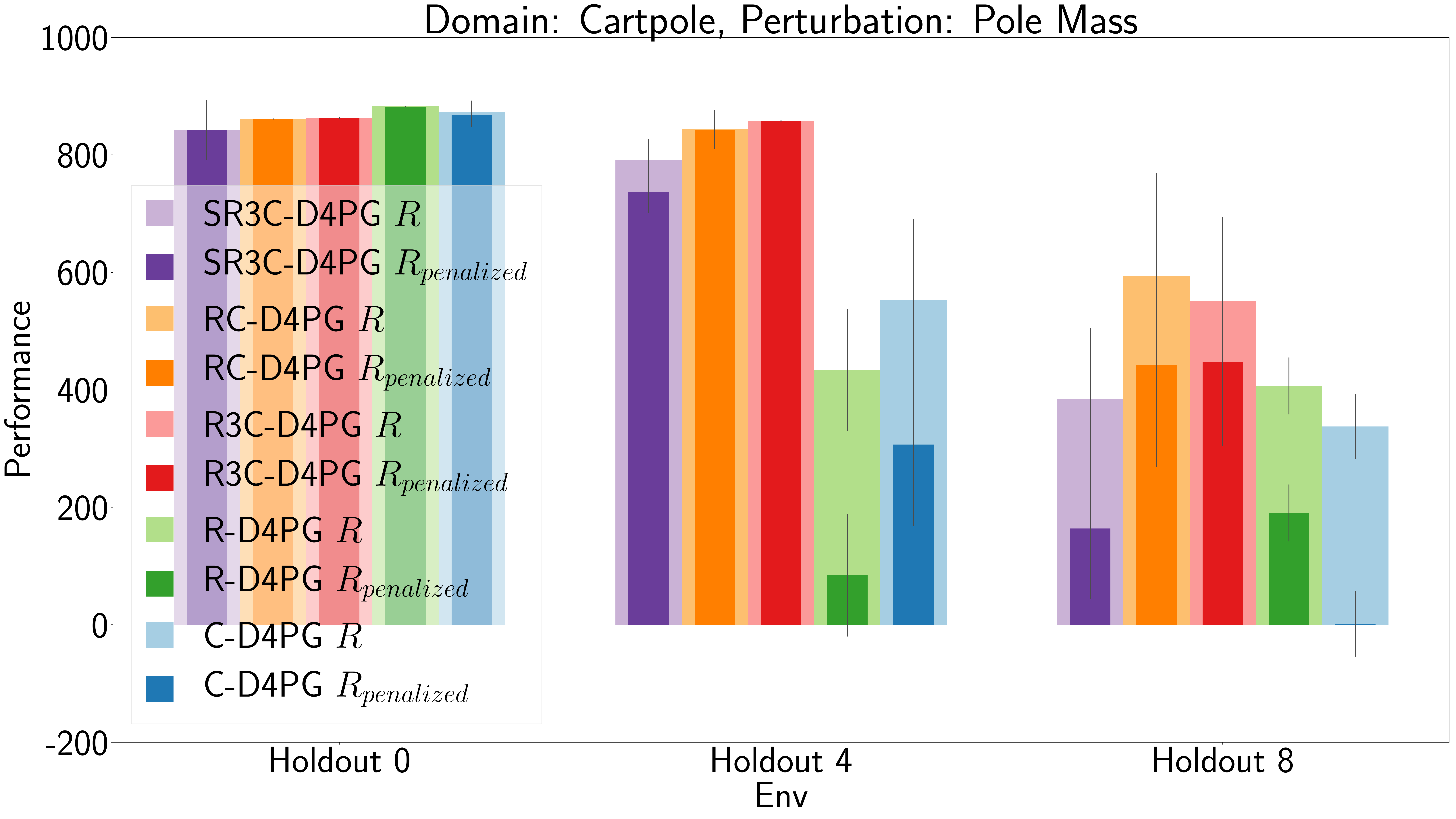}
        \includegraphics[width=0.49\textwidth]{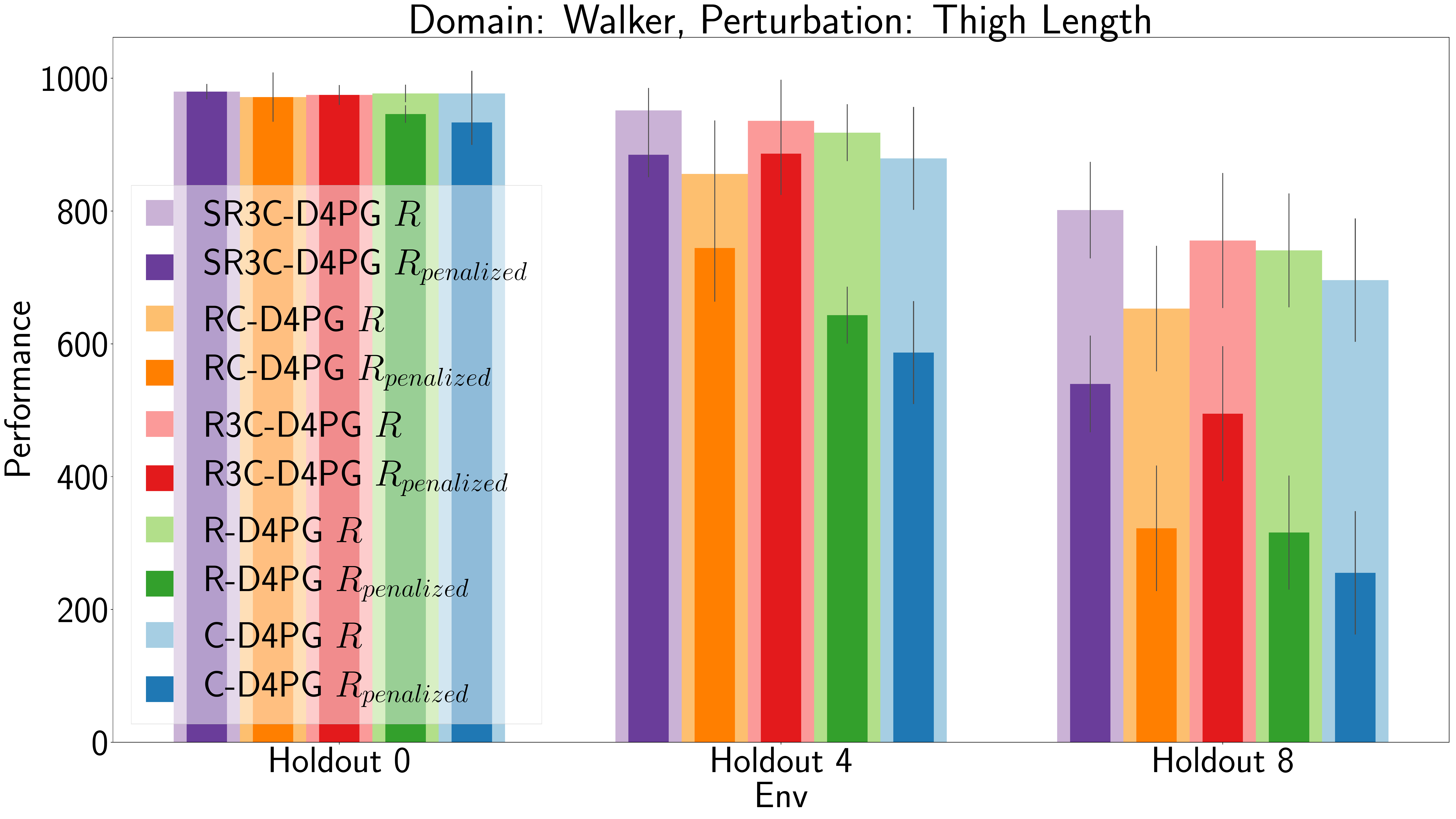}%
        \vspace{-0.25cm}
    \caption{The holdout set performance of the baseline algorithms on D4PG variants for Cartpole with slider damping and pole mass perturbations, and Walker with thigh length perturbations (bottom row).}
    \label{fig:robustnessd4pg}
\end{figure*}
%
%Add a table summarizing overall performance
%
%\vspace{-0.25cm}
% Same as fig:learning_curves but single line layout.
% \begin{figure*}[!h]
%     \centering
%         \includegraphics[width=\textwidth]{}
%     \caption{Learning curves of the DMPO variants for Walker with thigh length perturbations.}
%     \label{fig:learning_curves}
% \end{figure*}%

\begin{figure}[!h]
    \centering
    \includegraphics[width=\columnwidth]{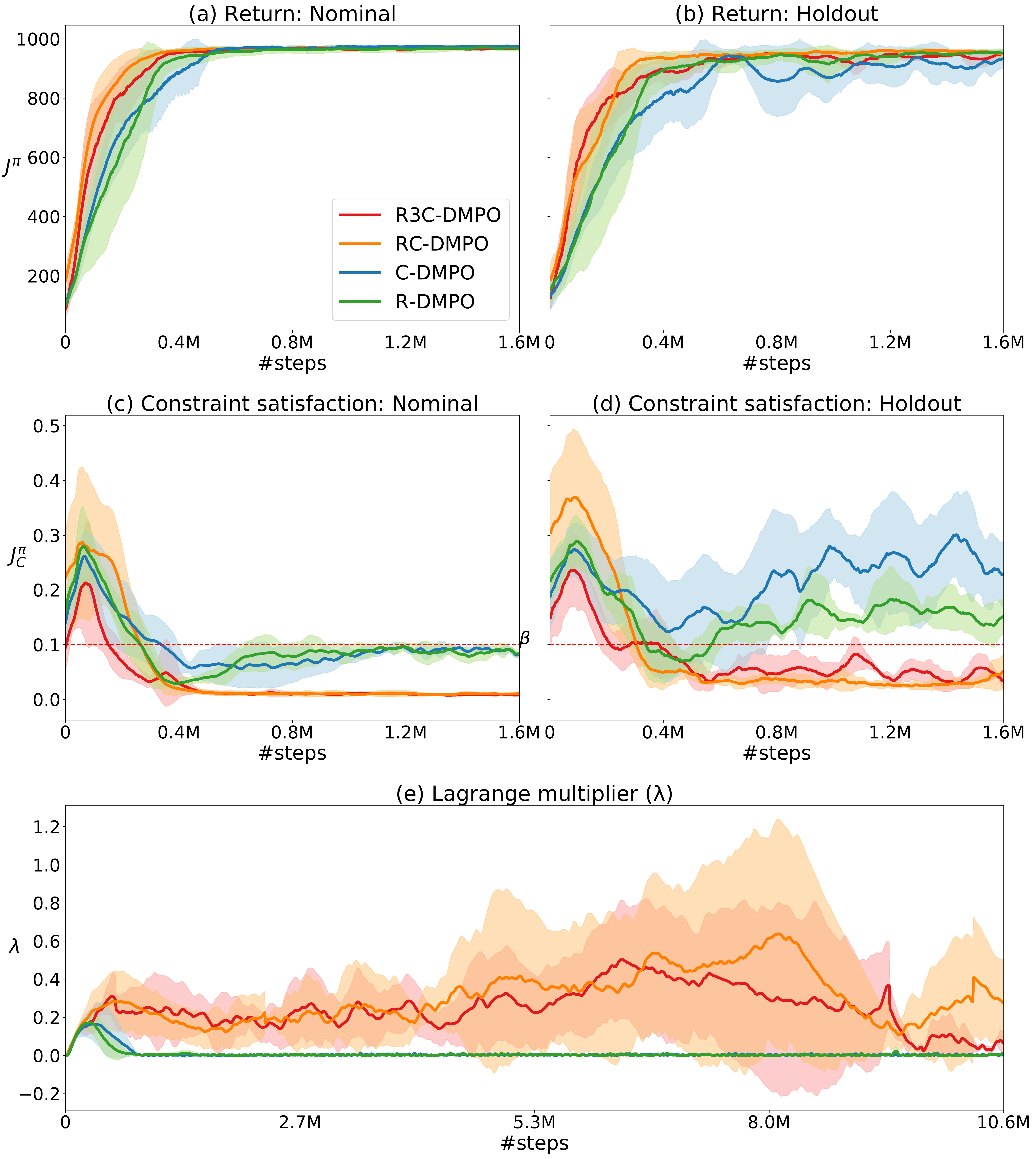}%
    \vspace{-0.25cm}
    \caption{Learning curves of the DMPO variants for Task 4 from Table~\ref{tab:main_task_defs} - the Walker domain with thigh length perturbations. This includes the episode return and constraint satisfaction performance (with respect to the threshold $\beta$) for the nominal model (a, c) and a holdout set (b, d) and (e) Lagrange learning performance. }
    \label{fig:learning_curves}
\end{figure}%

\section{Conclusion}
This paper simultaneously addresses robustness to constraint satisfaction and return with respect to state perturbations, two important challenges of real-world reinforcement learning which we collectively refer to as Constrained Model Misspecification (CMM). 

We present two RL objectives, R3C and RC, that yield robustness to constraints under the presence of state perturbations. We define R3C and RC Bellman operators to ensure that value-based RL algorithms will converge to a fixed point when optimizing these objectives. We then show that when incorporating this into the policy evaluation step of two well-known state-of-the-art continuous control RL algorithms the agent outperforms the baselines on $6$ Mujoco tasks. We also provide an investigative study into the learning performance of the robust and non-robust variants. We show that the robust variants may lead to an overly conservative solution with respect to constraint satisfaction.

In related work, \citet{everett2020certified} considers the problem of being verifiably robust to an adversary that can perturb the state $s'\in S$ to degrade performance as measured by a Q-function. \citet{2020scalable} consider the problem of learning agents (in deterministic environments with known dynamics) that satisfy constraints under perturbations to states $s'\in S$. In contrast, \eqref{eq:maineq} considers the general problem of learning agents that optimize for the return while satisfying constraints for a given RC-MDP. 
%
%
%The robust constraint value is also loosely connected to adversarial training and verification techniques  where a supervised learning network attempts to be robust with respect to a pre-defined set of adversarial perturbations \citep{madry_towards_2017, gowal_effectiveness_2018}.
%
% We are looking forward to seeing our work help RL methods be used to more domains where robustness and constrained satisfaction are important. In future work we are planning to evaluate our approach in a wider range of settings such as sim2real for robotic control, model-based RL, or constrained RL in non-stationary environments.
%

\bibliography{main}
\bibliographystyle{icml2021}

%%%%%%%%%%%%%%%%%%%%%%%%%%%%%%%%%%%%%%%%%%%%%%%%%%%%%%%%%%%%%%%%%%%%%%%%%%%%%%%
%%%%%%%%%%%%%%%%%%%%%%%%%%%%%%%%%%%%%%%%%%%%%%%%%%%%%%%%%%%%%%%%%%%%%%%%%%%%%%%
% DELETE THIS PART. DO NOT PLACE CONTENT AFTER THE REFERENCES!
%%%%%%%%%%%%%%%%%%%%%%%%%%%%%%%%%%%%%%%%%%%%%%%%%%%%%%%%%%%%%%%%%%%%%%%%%%%%%%%
%%%%%%%%%%%%%%%%%%%%%%%%%%%%%%%%%%%%%%%%%%%%%%%%%%%%%%%%%%%%%%%%%%%%%%%%%%%%%%%
\onecolumn
\appendix
\section{Proofs}

\subsection{Lagrange Multiplier Gradient}
\label{app:lagrange}

\begin{proof}
\begin{eqnarray}
    \frac{\partial}{\partial \lambda} \inf_{p\in P} \mathbb{E}^{p, \pi}\biggl[\sum_t \gamma^t r(s_t,a_t) \biggr] - \lambda \biggl( \sup_{p\in \mathcal{P}}\mathbb{E}^{p,\pi}\biggl[\sum \gamma c(s_t,a_t) \biggr] - \beta \biggr) &&\\
    =  - \biggl( \sup_{p\in \mathcal{P}}\mathbb{E}^{p,\pi}\biggl[\sum \gamma c(s_t,a_t) \biggr] - \beta \biggr)
\end{eqnarray}

\begin{eqnarray}
    \min_{\lambda\geq 0} \max_{\pi \in \Pi} \inf_{p\in P} \mathbb{E}^{p, \pi}\biggl[\sum_t \gamma^t r(s_t,a_t) \biggr] - \lambda \biggl( \sup_{p\in \mathcal{P}}\mathbb{E}^{p,\pi}\biggl[\sum \gamma c(s_t,a_t) \biggr] - \beta \biggr) 
\end{eqnarray}

\begin{eqnarray}
    \frac{\partial}{\partial \lambda} \inf_{p\in P} \mathbb{E}^{p, \pi}\biggl[\sum_t \gamma^t r(s_t,a_t) \biggr] - \lambda \biggl( \sup_{p\in \mathcal{P}}\mathbb{E}^{p,\pi}\biggl[\sum \gamma c(s_t,a_t) \biggr] - \beta \biggr) &&\\
    =  - \biggl( \sup_{p\in \mathcal{P}}\mathbb{E}^{p,\pi}\biggl[\sum \gamma c(s_t,a_t) \biggr] - \beta \biggr)
\end{eqnarray}
\end{proof}

\subsection{The R3C value function}
\label{app:r3c_value}
\begin{eqnarray*}
    \mathbf{V}^{\pi}(s) = V^{\pi}(s) - \lambda V_C^{\pi}(s) &&
     \\ = \inf_{p \in P} \mathbb{E}^{p, \pi} \biggl[r(s,\pi(s)) + \gamma V^{\pi}(s') \biggr] - \lambda \biggl[ \sup_{p' \in P} \mathbb{E}^{p', \pi} \biggl[c(s,\pi(s)) + \gamma V^{\pi}_C(s') \biggr] \biggr] &&\\
    = \biggl[r(s,\pi(s)) + \gamma \inf_{p \in P} \mathbb{E}^{p, \pi} \biggl(V^{\pi}(s') \biggr) \biggr] - \lambda \biggl[  \biggl[c(s,\pi(s)) + \gamma \sup_{p' \in P} \mathbb{E}^{p', \pi} \biggl(V^{\pi}_C(s')\biggr) \biggr] \biggr] &&\\
    = r(s,\pi(s)) - \lambda c(s,\pi(s)) + \gamma \inf_{p \in P} \mathbb{E}^{p, \pi} \biggl(V^{\pi}(s') \biggr)  -   \lambda \gamma \sup_{p' \in P} \mathbb{E}^{p', \pi} \biggl(V^{\pi}_C(s')\biggr)  &&\\
    = r(s,\pi(s)) - \lambda c(s,\pi(s)) + \gamma \biggl[ \sigma^{inf}_{\mathcal{P}(s,\pi(s))} V^{\pi}  -   \lambda   \sigma^{sup}_{\mathcal{P}(s,\pi(s))} V^{\pi}_C \biggr]  &&\\
    = \mathbf{r}(s,\pi(s)) + \gamma \biggl[ \sigma^{inf}_{\mathcal{P}(s,\pi(s))} V^{\pi}  -   \lambda   \sigma^{sup}_{\mathcal{P}(s,\pi(s))} V^{\pi}_C \biggr] &&\\
\end{eqnarray*}

\subsection{Sup Bellman Operator}
\label{app:sup}

The R3C Bellman operator can be defined in terms of two separate Bellman operators: $T^{\pi}_{R3C}\mathbf{V}(s) = T^{\pi}_{inf}V(s) - \lambda T^{\pi}_{sup}V_C(s)$ where $T^{\pi}_{inf}:\mathbb{R}^{|S|}\rightarrow \mathbb{R}^{|S|}$ is the robust Bellman operator and $T^{\pi}_{sup}:\mathbb{R}^{|S|}\rightarrow \mathbb{R}^{|S|}$ is defined as the sup Bellman operator. These are defined as follows:

\begin{equation}
    T^{\pi}_{inf} V(s) =  r(s,\pi(s)) + \gamma \biggl[ \inf_{p \in P} \mathbb{E}^{p, \pi} \biggl(V(s') \biggr) \biggr]
\end{equation}

It has been previously shown that $T^{\pi}_{inf}$ is a contraction with respect to the max norm \citep{tamar2014scaling} and therefore converges to a fixed point. It remains to be shown that $T^{\pi}_{sup}$ is a contraction operator and that the R3C Bellman operator is a contraction operator.

\begin{equation}
    T^{\pi}_{sup} V_C(s) =  c(s,\pi(s)) + \gamma \biggl[\sup_{p \in P} \mathbb{E}^{P, \pi} \biggl(V_C(s')\biggr) \biggr] 
\end{equation}

\begin{theorem}[Sup Bellman operator contraction]
For two arbitrary value functions $U:S\rightarrow \mathbb{R}$ and $V:S\rightarrow \mathbb{R}$, we can show that the sup Bellman operator $\mathcal{T}^{\pi}_{sup}:\mathbb{R}^{|S|} \rightarrow \mathbb{R}^{|S|}$ is a contraction. That is 

\begin{equation}
\Vert \mathcal{T}^{\pi}_{sup} U - \mathcal{T}^{\pi}_{sup} V \Vert_\infty \leq \gamma \Vert U - V \Vert_\infty
\end{equation}
\end{theorem}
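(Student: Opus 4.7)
The plan is to reduce the inequality to a pointwise bound on $|(T^\pi_{\sup} U)(s) - (T^\pi_{\sup} V)(s)|$ for an arbitrary state $s \in S$, then take the supremum over $s$ at the end. Writing out both sides at $s$, the immediate cost $c(s,\pi(s))$ cancels because it does not depend on the input value function, so the difference becomes
\begin{equation*}
(T^\pi_{\sup} U)(s) - (T^\pi_{\sup} V)(s) = \gamma \Bigl[ \sup_{p \in \mathcal{P}(s,\pi(s))} \mathbb{E}^{p,\pi}[U(s')] - \sup_{p' \in \mathcal{P}(s,\pi(s))} \mathbb{E}^{p',\pi}[V(s')] \Bigr].
\end{equation*}
The discount factor $\gamma$ is pulled out and is where the contraction rate will ultimately come from.

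Next I would invoke the elementary lemma that for any two real-valued functions $f,g$ on a common domain, $|\sup_x f(x) - \sup_x g(x)| \le \sup_x |f(x) - g(x)|$. Applying this with $f(p) = \mathbb{E}^{p,\pi}[U(s')]$ and $g(p) = \mathbb{E}^{p,\pi}[V(s')]$ and domain $\mathcal{P}(s,\pi(s))$, the problem reduces to bounding $\sup_p |\mathbb{E}^{p,\pi}[U(s') - V(s')]|$. By Jensen (or monotonicity of expectation), $|\mathbb{E}^{p,\pi}[U(s') - V(s')]| \le \mathbb{E}^{p,\pi}[|U(s') - V(s')|] \le \Vert U - V \Vert_\infty$, and the right-hand side is independent of $p$, so the supremum over $p$ retains the same bound.

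Stitching these pieces together gives $|(T^\pi_{\sup} U)(s) - (T^\pi_{\sup} V)(s)| \le \gamma \Vert U - V \Vert_\infty$ for every $s$, and taking the supremum over $s$ on the left-hand side yields the desired contraction. The only mild subtlety is that the suprema over the uncertainty set may not be attained, which rules out the simplest ``pick the maximizer'' argument; this is precisely why I would route the proof through the abstract $|\sup f - \sup g| \le \sup |f - g|$ inequality rather than through an explicit worst-case transition kernel. Under a rectangularity assumption (already in force in the paper) plus a closedness/compactness assumption on $\mathcal{P}(s,\pi(s))$ the maximizer would exist and a direct argument would also work, but the route above avoids needing that extra hypothesis.
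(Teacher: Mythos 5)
Your proof is correct and takes essentially the same route as the paper's: the paper's explicit $\epsilon$-argument (choosing a $p_s$ that nearly attains the supremum for $U$ and noting that the supremum for $V$ dominates its value at that same $p_s$) is precisely the standard proof of the inequality $\vert \sup_p f(p) - \sup_p g(p)\vert \leq \sup_p \vert f(p) - g(p)\vert$ that you invoke as a lemma, followed by the same $\Vert U - V\Vert_\infty$ bound and supremum over states. Your remark about non-attainment of the supremum is exactly why the paper works with an $\epsilon$-near-maximizer rather than an exact one, so the two write-ups differ only in packaging.
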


\begin{proof}

We follow the proofs from \citep{tamar2014scaling,iyengar2005robust}, Let $U,V \in \mathbb{R}^{|S|}$, and $s \in S$ an arbitrary state. Assume $\mathcal{T}^{\pi}_{sup} U(s) \geq \mathcal{T}^\pi_{sup} V(s)$. Let $\epsilon > 0$ be an arbitrary positive number. 

By the definition of the $\sup$ operator, there exists $p_{s} \in \mathcal{P}$ such that,

\begin{multline}
    c(s,a) + \gamma \sup_{p \in \mathcal{P}} \mathbb{E}_{a \sim \pi(\cdot |s)}U(s') 
    <  c(s, a) +  \gamma \mathbb{E}_{s' \sim p_{s}(\cdot|s,a)} U(s') + \epsilon  \\  
\end{multline}

In addition, we have by definition that:
% \kingtim{Is the $p_s$ used in the following equation the same as the $p_s$ in the equation above? Or are these two separate elements?}
% \danm{It is the same one.}
\begin{multline}
    c(s,a) + \gamma \sup_{p \in \mathcal{P}} \mathbb{E}_{a \sim \pi(\cdot |s)}V(s') 
    >  c(s, a) +  \gamma \mathbb{E}_{s' \sim p_{s}(\cdot|s,a)} V(s')    \\
\end{multline}

Thus, we have,

\begin{equation}
\begin{split}
0 &\leq \mathcal{T}^{\pi}_{sup} U(s) - \mathcal{T}^{\pi}_{sup} V(s)\\
&<   \mathbb{E}_{a \sim \pi(\cdot|s)} [c(s,a) + \gamma \mathbb{E}_{s' \sim p_{s}(\cdot|s,a)} U(s')] + \epsilon - \biggl[\mathbb{E}_{a \sim \pi(\cdot|s)} [c(s,a) + \gamma \mathbb{E}_{s' \sim p_{s}(\cdot|s,a)} V(s')] \biggr]\\
&= \mathbb{E}_{a \sim \pi(\cdot |s), s' \sim p_s(\cdot|s,a)} [ \gamma U(s')]  - \mathbb{E}_{a \sim \pi(\cdot |s), s' \sim p_s(\cdot|s,a)} [ \gamma V(s')] + \epsilon\\
&\leq \gamma \vert U(s') - V(s') \vert + \epsilon\\
&  \leq \gamma \Vert U - V \Vert_\infty  + \epsilon\\
\end{split}
\end{equation}

Applying a similar argument for the case: $\mathcal{T}^{\pi} \tilde{U}(s) \leq \mathcal{T}^\pi \tilde{V}(s)$ results in:

\begin{equation}
\vert \mathcal{T}^{\pi} U(s) - \mathcal{T}^{\pi} V(s) \vert_\infty \leq \gamma \Vert U - V \Vert_\infty + \epsilon
\end{equation}

Since $\epsilon$ is arbitrary, we establish the result:

\begin{equation}
\Vert \mathcal{T}^{\pi} U - \mathcal{T}^{\pi} V \Vert_\infty \leq \gamma \Vert U - V \Vert_\infty
\end{equation}

\end{proof}

The previous two contraction mappings ensure that $V(s)$ and $V_C(s)$ converge to fixed points. However, in order to prove that the combination converges to a unique fixed point, we need to prove that $\mathcal{T}^{\pi}_{R3C}$ is a contraction mapping.

\label{app:sup}

\section{Experiments}

\subsection{Constraint Definitions}
\label{app:constraint_defs}
The per-domain safety constraints of the Real-World Reinforcement Learning (RWRL) suite that we use in the paper are given in Table~\ref{tab:safe_envs_full}. %The constraint thresholds for each of the domains can be found in Table~\ref{tab:thresholds}.

\begin{table}
\small
\begin{center}
% \begin{minipage}[t]{0.45\linewidth}
\begin{tabular}{p{4cm} p{3.5cm}}
 \multicolumn{2}{l}{\textbf{Cartpole} variables: $x, \theta$} \\
 \hline
 {Type} & {Constraint} \\
 %\hline
 \hline
 \texttt{slider\_pos} & 
  $x_l < x < x_r$ \\
 %\hline
 \texttt{slider\_accel}
 &$ \ddot{x} < A_\textit{max}$ \\
 %\hline
 \texttt{balance\_velocity}*
 & $ \left| \theta \right| > \theta_L \vee \dot{\theta} < \dot{\theta}_V $ \\
 \hline
\end{tabular}
\\
% \end{minipage}
% \begin{minipage}{0.45\linewidth}
\begin{tabular}[t]{p{4cm} p{3.5cm}}
\multicolumn{2}{l}{} \\
\multicolumn{2}{l}{\textbf{Walker} variables: $\bm{\theta}, \bm{u}, \bm{F}$} \\
\hline
 {Type} & {Constraint} \\
 %\hline
 \hline
 \texttt{joint\_angle}
 & $\bm{\theta}_{L} < \bm{\theta} < \bm{\theta}_{U}$ \\
 %\hline
 \texttt{joint\_velocity}*
 & $ \max_i \left| \dot{\bm{\theta}_i} \right| < L_{\dot{\theta}} $ \\
 %\hline 
 \texttt{dangerous\_fall}
 & $0 < (\bm{u}_z \cdot \bm{x})$ \\
 %\hline
 \texttt{torso\_upright}
  &  $0 < \bm{u}_z$\\
 \hline
\end{tabular}
% \end{minipage}
\\
% \begin{minipage}{0.45\linewidth}
\begin{tabular}[t]{p{4cm} p{3.5cm}}
\multicolumn{2}{l}{} \\
\multicolumn{2}{l}{\textbf{Quadruped} variables: $\bm{\theta}, \bm{u}, \bm{F}$} \\
\hline
 {Type} & {Constraint} \\
 %\hline
 \hline
  \texttt{joint\_angle}*
 & $\theta_{L, i} < \bm{\theta}_i < \theta_{U,i}$ \\
 %\hline
 \texttt{joint\_velocity}
 & $ \max_i \left| \dot{\bm{\theta}_i} \right| < L_{\dot{\theta}} $ \\
 %\hline
 \texttt{upright}
 &  $0 < \bm{u}_z$\\
% \hline
\texttt{foot\_force}
 &$ \bm{F}_{\textit{EE}} < F_\textit{max} $ \\
\hline
\end{tabular}
% \end{minipage}
% \begin{minipage}{0.45\linewidth}
\\
% \end{minipage}
\end{center}
\caption{Safety constraints available for each RWRL suite domain; the constraints we use in this paper are indicated by an asterisk (*).}
\label{tab:safe_envs_full}
\end{table}

% \begin{table}[htpb]
% \begin{center}
% \begin{tabular}{l c}
% \toprule
% Domain    & Thresholds                        \\ \midrule
% Cartpole  & 0.115 \\ %\hline
% Walker    & 0.7 \\ %\hline
% Quadruped & 0.1 \\ %\hline
% \bottomrule
% \end{tabular}
% \end{center}
% \caption{Constraint thresholds ($\beta$) for each domain.\label{tab:thresholds}}
% \end{table}

\subsection{Hyperparameters}

The hyperparameters used for all variants of D4PG can be found in Table \ref{tab:d4pg_hyperparameters}. The hyperparameters for the DMPO variants can be found in Table \ref{table:mpo_hyperparameters}.

\begin{table}[htpb]
\begin{center}
 \begin{tabular}{l  c} 
 \toprule
 D4PG Hyperparameters & Value \\
 \midrule
 Policy net & 256-256-256 \\ 
 $\sigma$ (exploration noise) & 0.1 \\
 %Number of actions sampled per state& 15\\
 Critic net & 512-512-256 \\ 
 Critic num. atoms & 51\\
 Critic vmin & -150 \\
 Critic vmax & 150\\ 
 N-step transition & 5\\
 Discount factor ($\gamma$) & 0.99 \\
 Policy and critic opt. learning rate & 0.0001 \\
 Replay buffer size & 1000000 \\
 Target network update period & 100\\
 Batch size & 256\\
 Activation function & elu\\
 Layer norm on first layer & Yes\\
 Tanh on output of layer norm & Yes\\
 \bottomrule
\end{tabular}
\end{center}
\caption{Hyperparameters for all variants of D4PG.\label{tab:d4pg_hyperparameters}}
\end{table}

\begin{table}[t]
\begin{center}
 \begin{tabular}{l c} 
 \toprule
 DMPO Hyperparameters & Value \\
 \midrule
 Policy net & 256-256-256 \\ 
 Number of actions sampled per state& 20\\
 Q function net & 512-512-512 \\
 Critic num. atoms & 51\\
 Critic vmin & -150 \\
 Critic vmax & 150\\ 
 $\epsilon$ & 0.1 \\
 $\epsilon_{\mu}$ & $1e-02$ \\
 $\epsilon_{\Sigma}$ & $1e-06$\\
 Discount factor ($\gamma$) & 0.99 \\
 Adam learning rate & $1e-04$ \\
 Replay buffer size & 1000000 \\
 Target network update period & 100\\
 Batch size & 256\\
 Activation function & elu\\
 Layer norm on first layer & Yes\\
 Tanh on output of layer norm & Yes\\
 Tanh on Gaussian mean & No \\
 Min variance & Zero \\
 Max variance & unbounded \\
\bottomrule
\end{tabular}
\end{center}
\caption{Hyperparameters for all variants of DMPO.}
\label{table:mpo_hyperparameters}
\end{table}

\begin{table}[htpb]
\begin{center}
\begin{tabular}{l c c}
\toprule
{RWRL Domain: Task}             & {Observation Dimension} & {Action Dimension} \\ \midrule
{Cartpole: Swingup} & 5                        & 1                         \\ %\hline
{Walker: Walk}      & 18                       & 6                         \\ %\hline
{Quadruped: Walk}   & 78                       & 12                          \\ %\hline
\bottomrule
\end{tabular}
\end{center}
\caption{Observation and action dimension for each RWRL domain: task pair.\label{tab:state_dimensions}}
\end{table}

\begin{table}[]
\centering
\begin{tabular}{llllll}
\toprule
Domain & Variant & Constraint & Safety coeff & Threshold & Perturbation Type \\ 
\midrule
{Cartpole} & Swingup & Balance Velocity & 0.3  & 0.115  & Slider Damping \\
{Cartpole} & Swingup & Balance Velocity & 0.3  & 0.115  & Joint Damping \\
{Cartpole} & Swingup & Balance Velocity & 0.3  & 0.115  & Pole Mass  \\
\midrule
{Quadruped} & Walk  & Joint Angle & 0.3  & 0.7  & Shin Length \\
\midrule
{Walker} & Walk  & Joint Velocity & 0.3  & 0.1  & Thigh Length \\
{Walker} & Walk  & Joint Velocity & 0.3  & 0.1  & Torso Length \\
\bottomrule
\end{tabular}
\caption{The full list of the tasks we defined from the Real World RL Suite.}
\label{app:full_list}
\end{table}

\begin{table}[]
\centering
\tiny
\begin{tabular}{lllll}
\toprule
{Domain} & {Perturbation Type} & {Nom.\ Val.} & {Training Uncertainty Set} & {Holdout Set}                            \\
\midrule
Cartpole        & Joint Damping                                   & 0.0                                   & {[}0.0, 0.005, 0.01{]}                                 & {[}0.0025, 0.007,  0.008, 0.009, 0.015, 0.02, 0.025, 0.03, 0.035{]} \\
                & Slider Damping                                  & 0.001                                 & {[}0.001, 1.7, 1.9{]}                                  & {[}1.0, 1.4, 1.6, 1.8, 2.1, 2.3, 2.4, 2.5, 2.6{]}                 \\
                & Pole Mass                                       & 0.1                                   & {[}0.1, 0.2, 0.5{]}                                    & {[}0.05, 0.15, 0.25, 0.35, 0.45, 0.55, 0.6, 0.7, 0.8{]}             \\
\midrule
Quadruped       & Shin Length                                     & 0.25                                  & {[}0.25, 0.625, 0.70{]}                                & {[}0.85, 0.88, 0.92, 0.96, 1.0, 1.04, 1.08, 1.12, 1.16, 1.2{]}           \\
\midrule
Walker          & Thigh Length                                    & 0.225                                 & {[}0.225, 0.20, 0.17{]}                                & {[}0.21, 0.19, 0.185, 0.175,0.165, 0.155, 0.152, 0.15, 0.148{]}   \\
& Torso Length                                    & 0.3                                   & {[}0.3, 0.32, 0.34{]}                                  & {[}0.42, 0.43, 0.45, 0.47, 0.49, 0.51, 0.53. 0.55, 0.57{]}          \\
\bottomrule
\end{tabular}
\caption{Final experiment parameters.}
\label{app:uncertaintysets}
\end{table}

\subsection{Sensitivity to a fixed policy}

Figures \ref{fig:sensitivity_cartpole}, \ref{fig:sensitivity_quadruped} and \ref{fig:sensitivity_walker} show the sensitivity to perturbations of a fixed RC-D4PG policy on Cartpole, Quadruped and Walker respectively.

\begin{figure}[!h]
    \centering
        \includegraphics[width=0.48\textwidth]{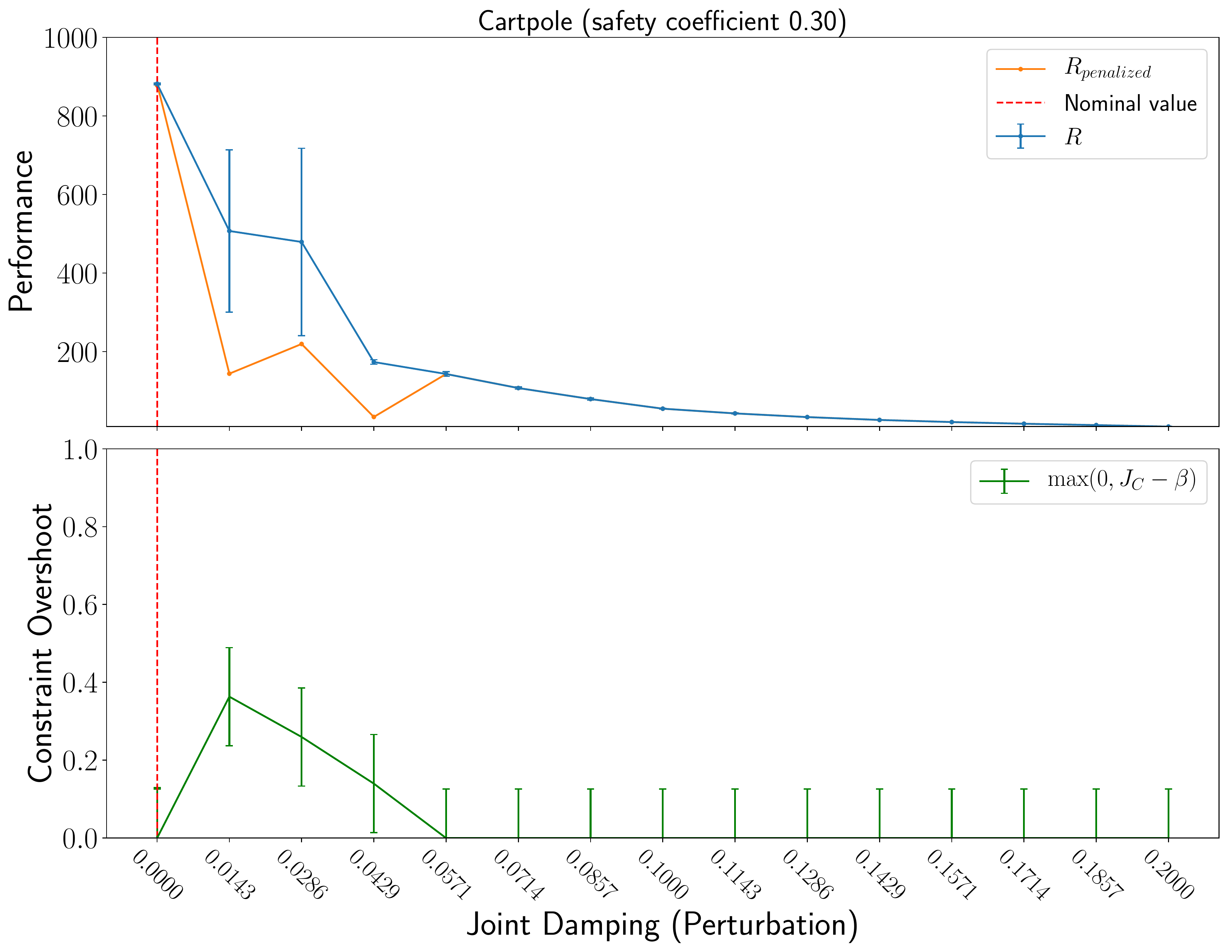}
        \includegraphics[width=0.48\textwidth]{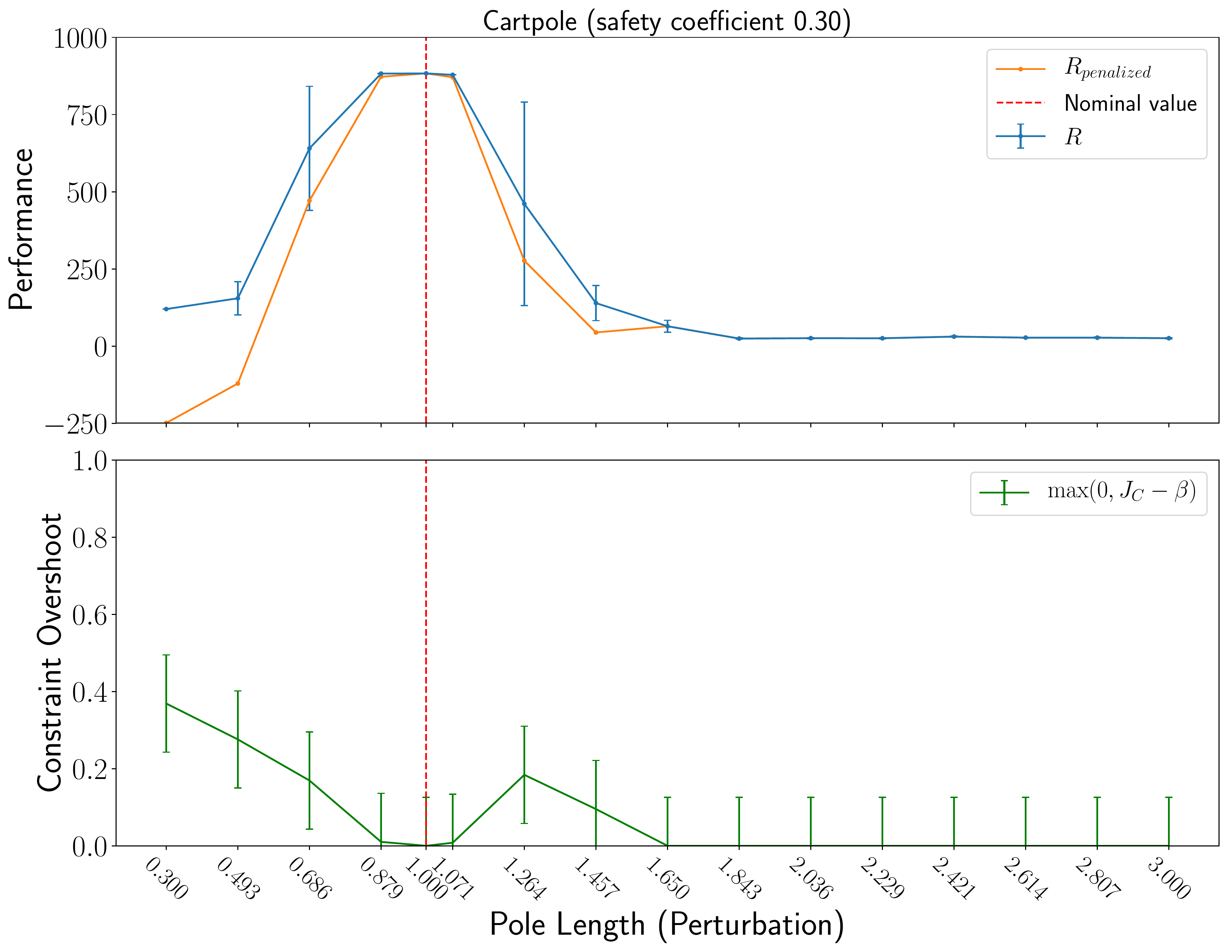}
        \includegraphics[width=0.48\textwidth]{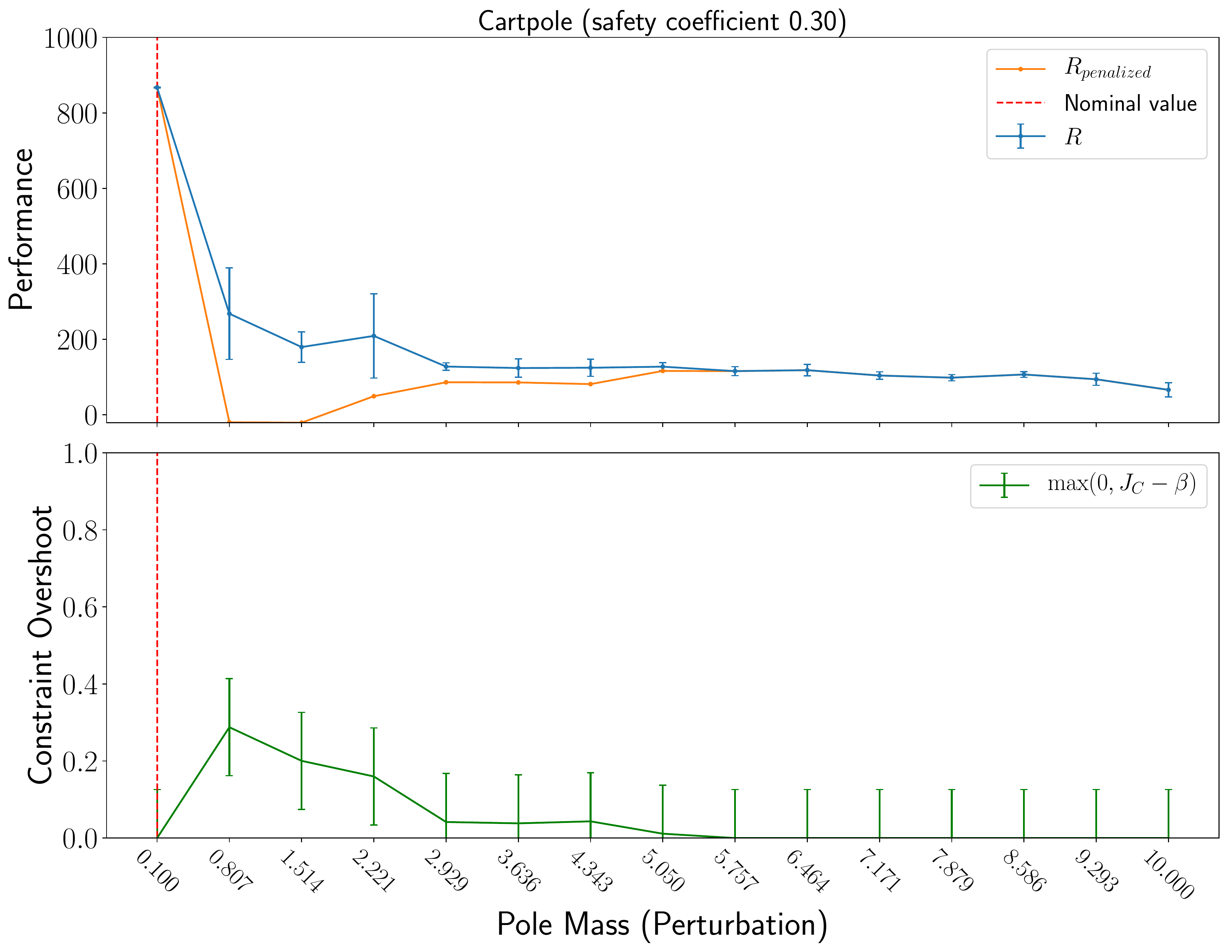}
        \includegraphics[width=0.48\textwidth]{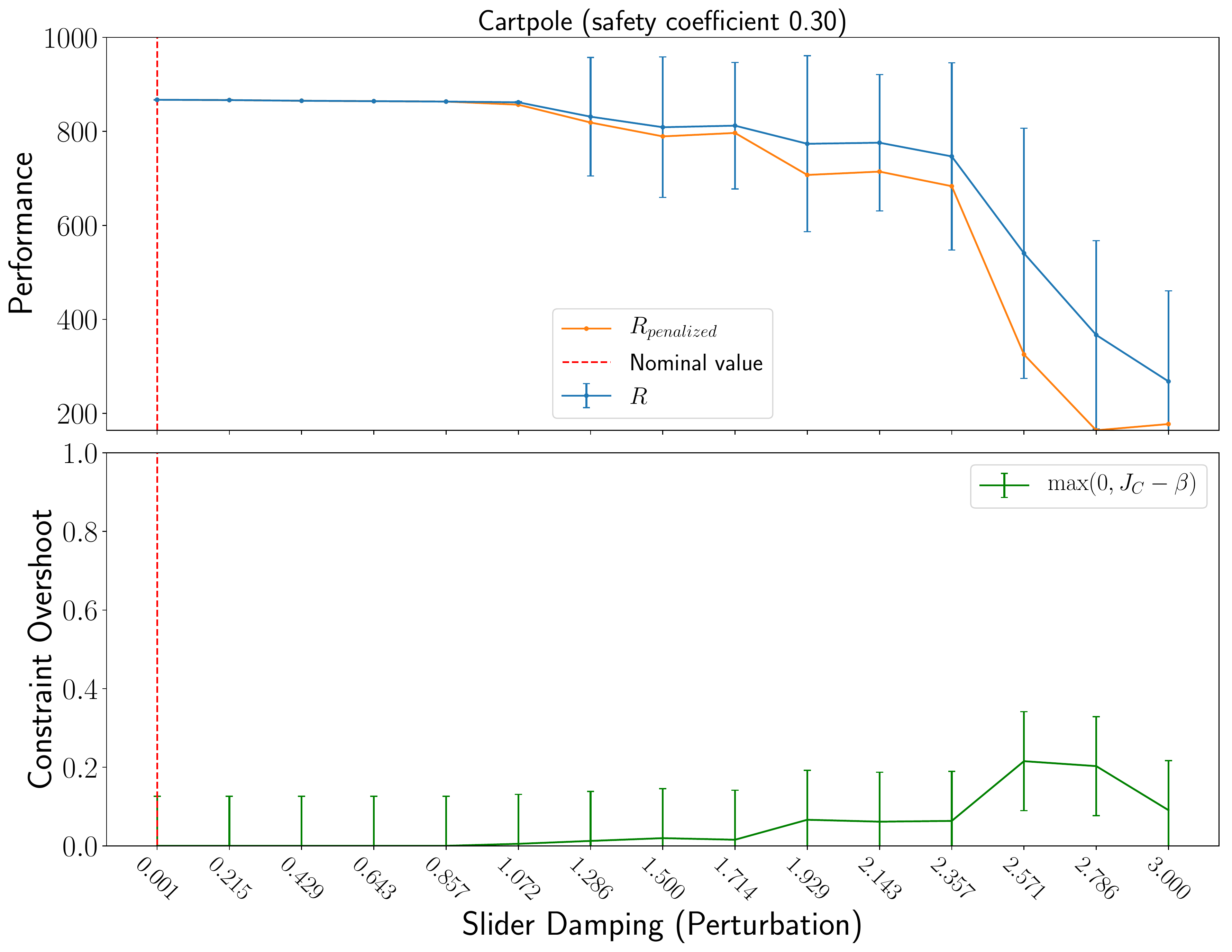}
    \caption{The effect on constraint satisfaction and return as perturbations are added to \tasktext{cartpole} for a fixed C-D4PG policy.}
    \label{fig:sensitivity_cartpole}    
\end{figure}

\begin{figure}[!h]
    \centering
        \includegraphics[width=0.48\textwidth]{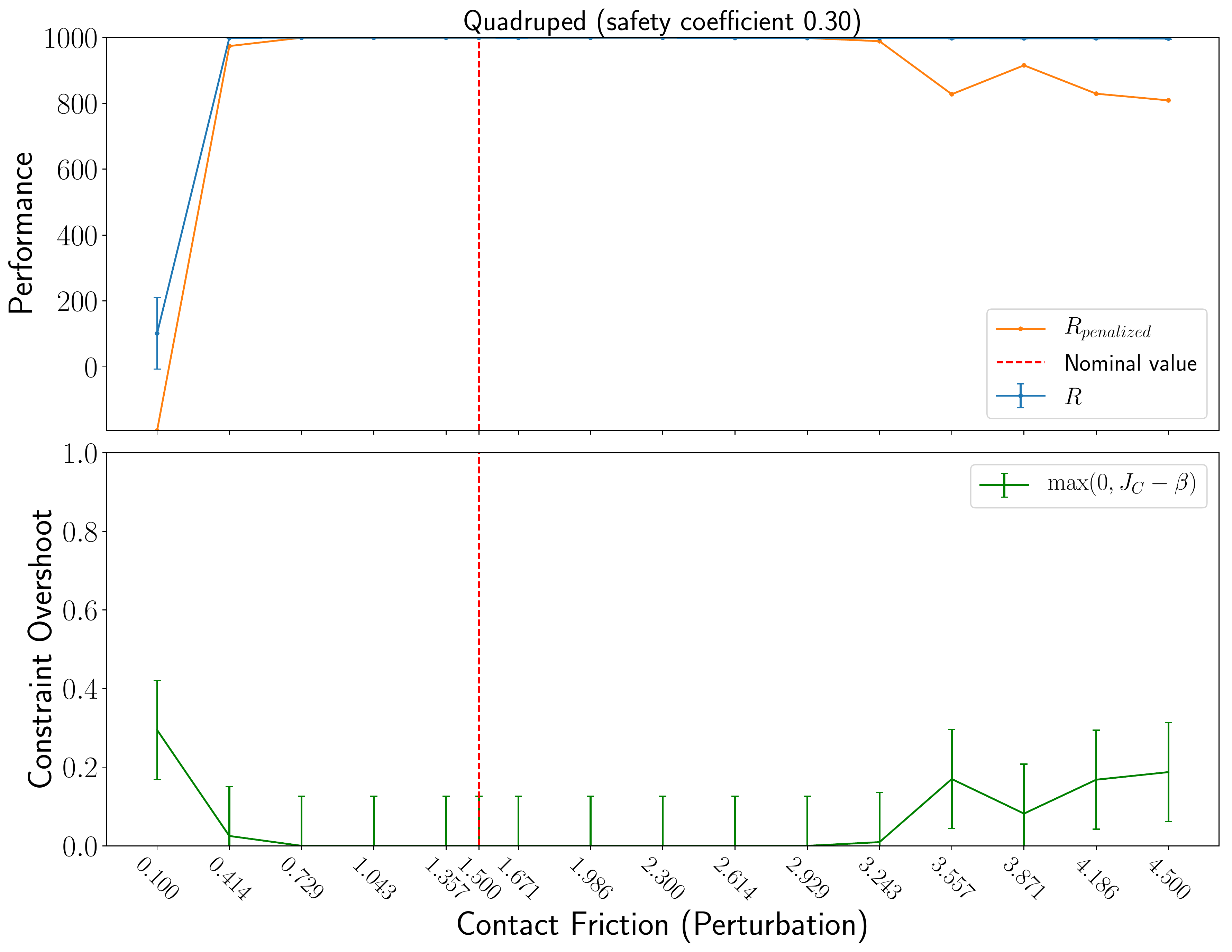}
        \includegraphics[width=0.48\textwidth]{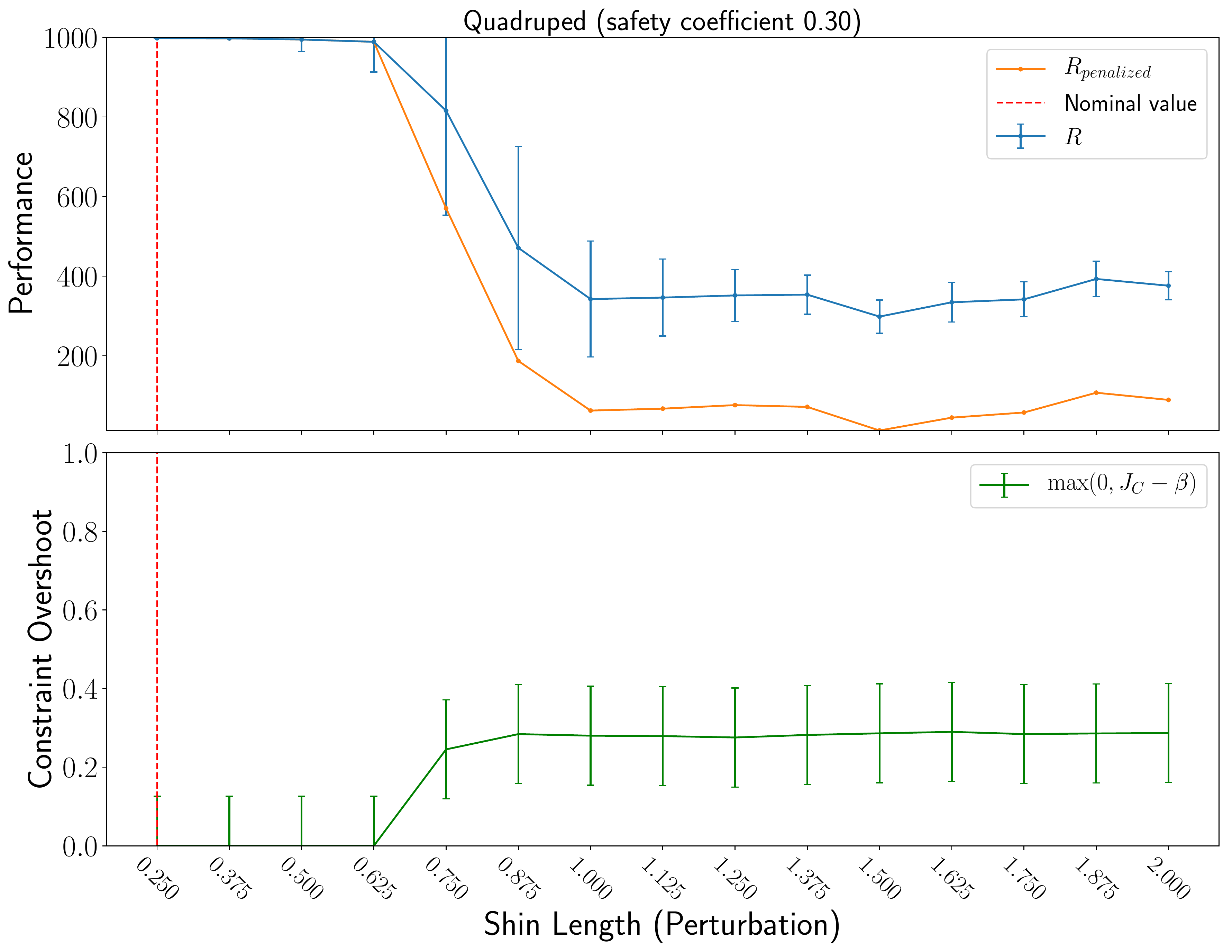}
        \includegraphics[width=0.48\textwidth]{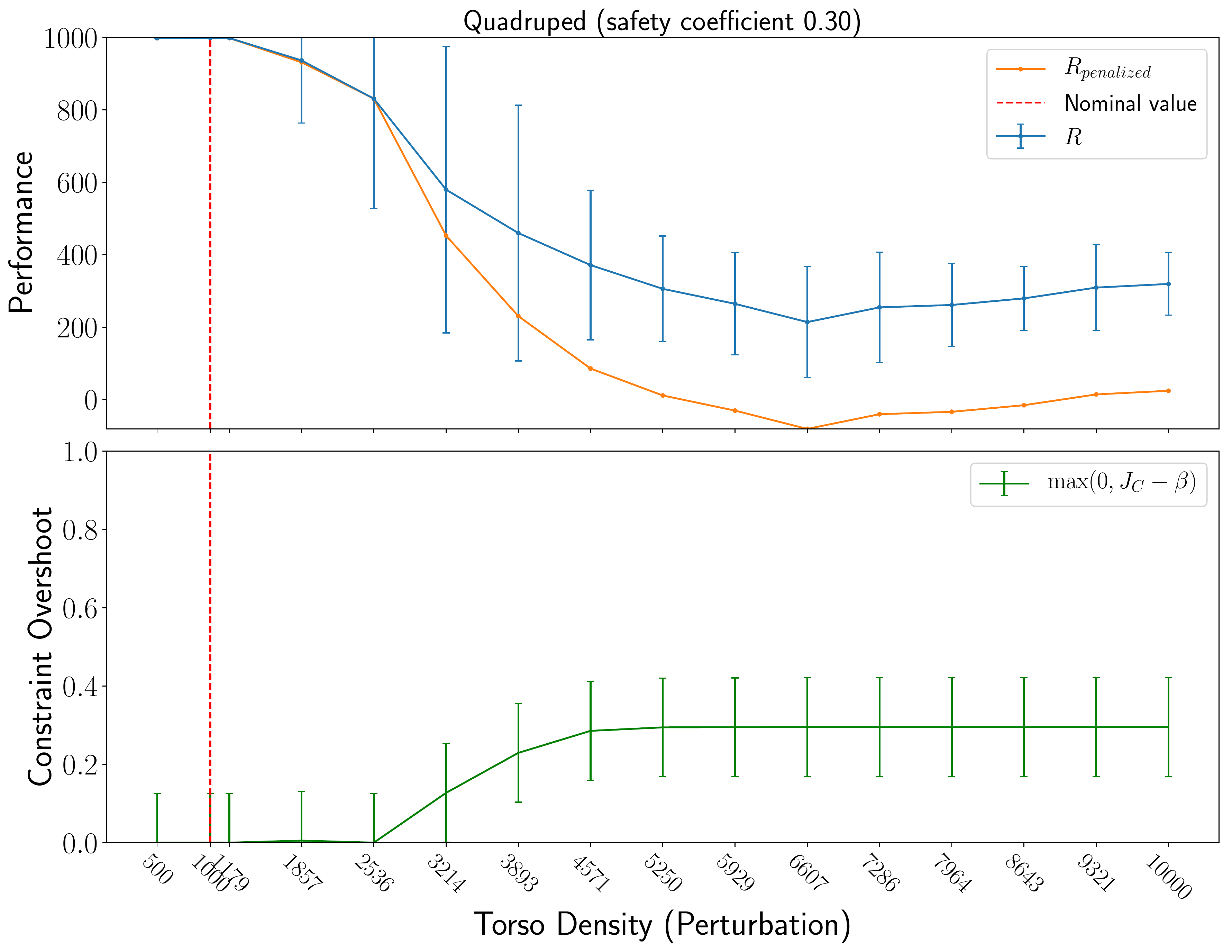}
    \caption{The effect on constraint satisfaction and return as perturbations are added to \tasktext{quadruped} for a fixed C-D4PG policy.}
    \label{fig:sensitivity_quadruped}    
\end{figure}

\begin{figure}[!h]
    \centering
        \includegraphics[width=0.48\textwidth]{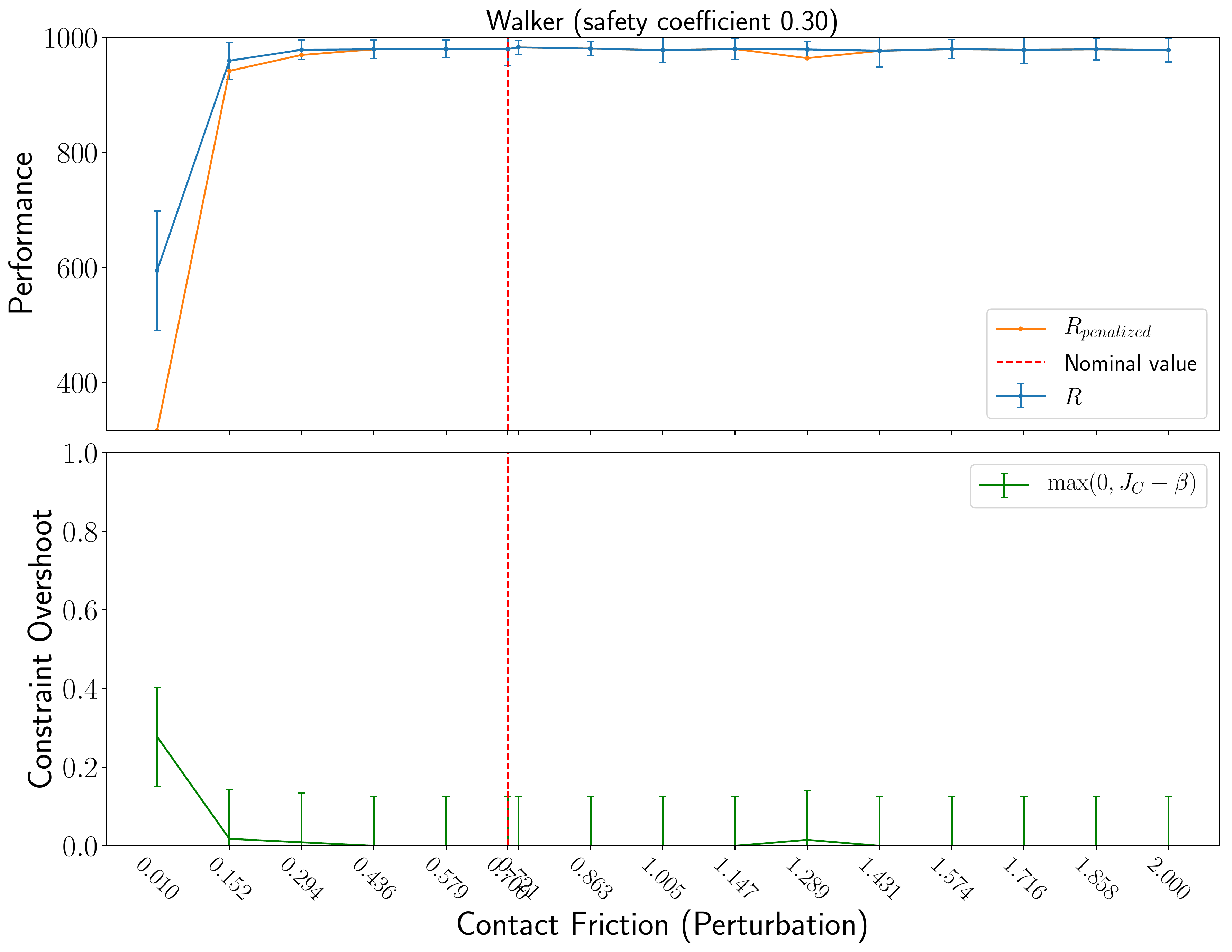}
        \includegraphics[width=0.48\textwidth]{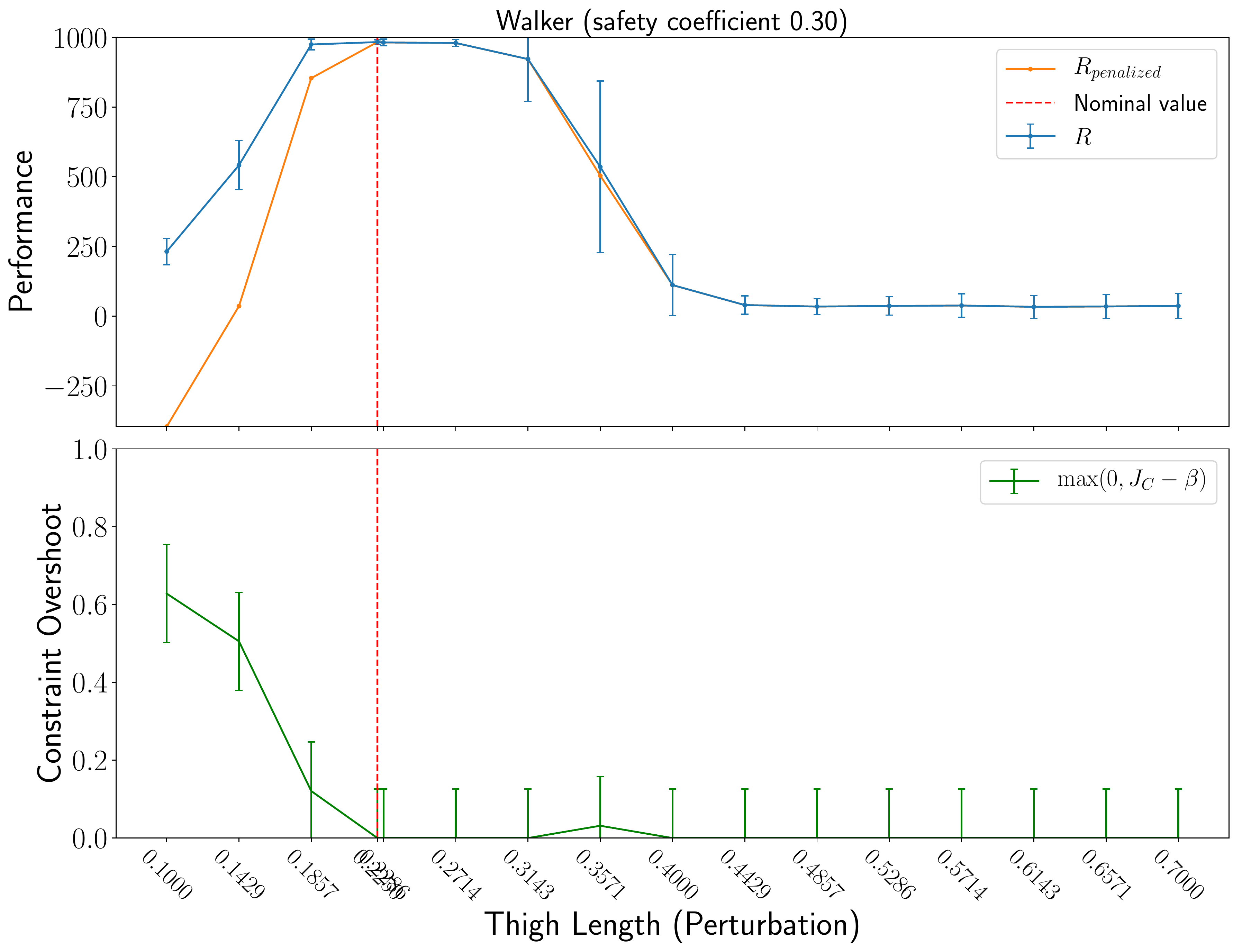}
        \includegraphics[width=0.48\textwidth]{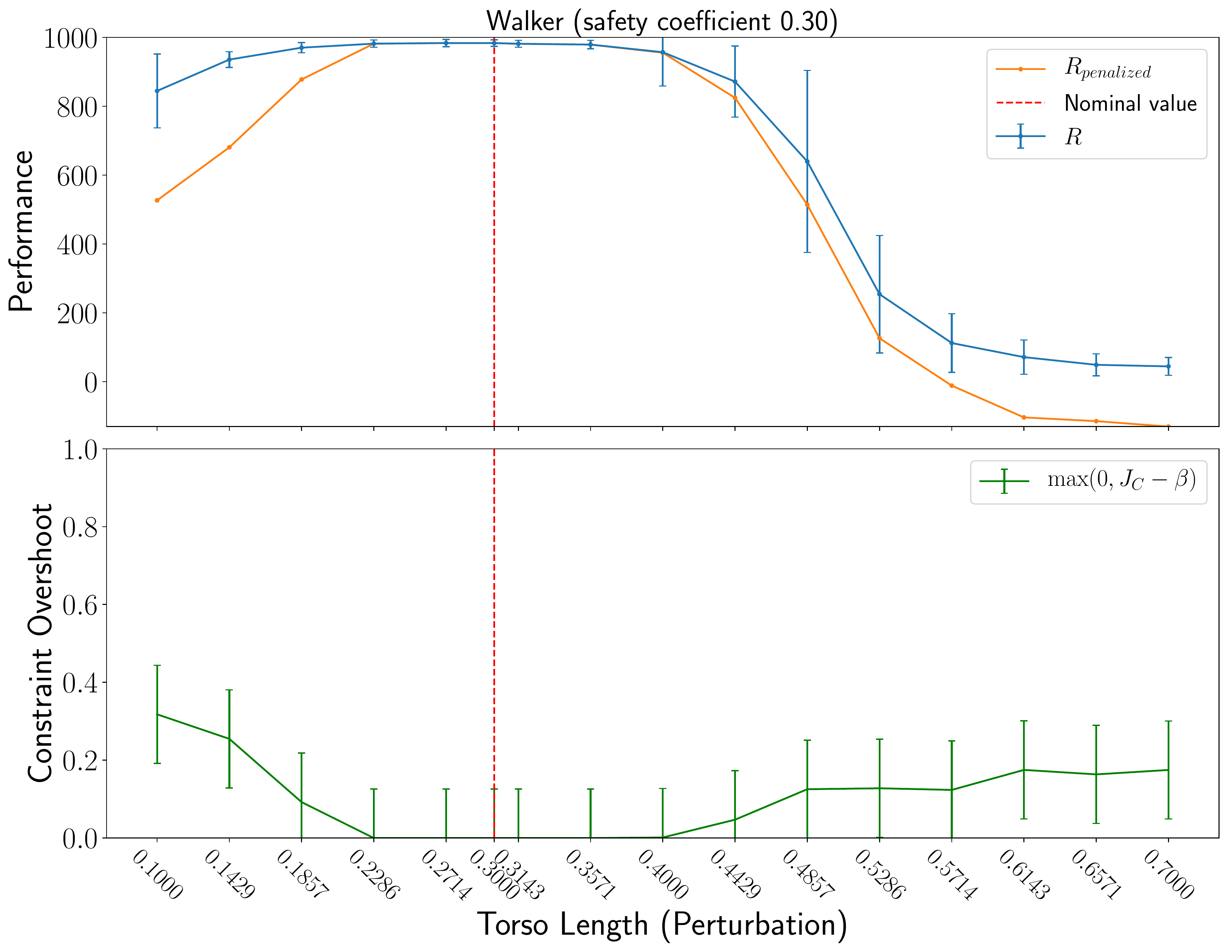}
    \caption{The effect on constraint satisfaction and return as perturbations are added to \tasktext{walker} for a fixed C-D4PG policy.}
    \label{fig:sensitivity_walker}    
\end{figure}

\subsection{Robustness performance of D4PG and DMPO variants}

\begin{figure}[!h]
    \centering
        \includegraphics[width=0.83\textwidth]{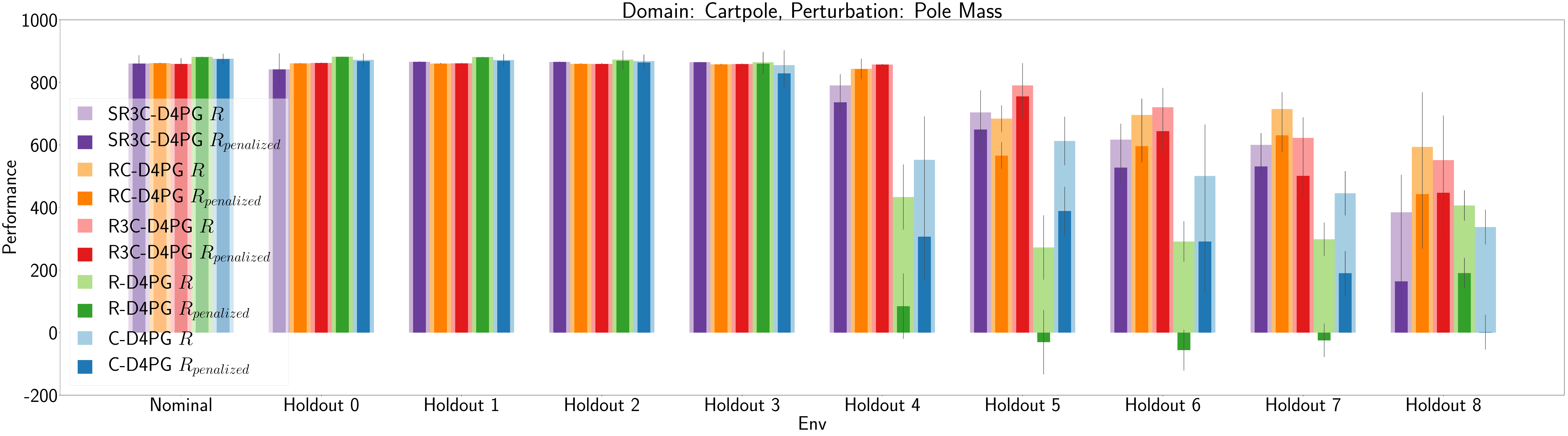}
        \includegraphics[width=0.83\textwidth]{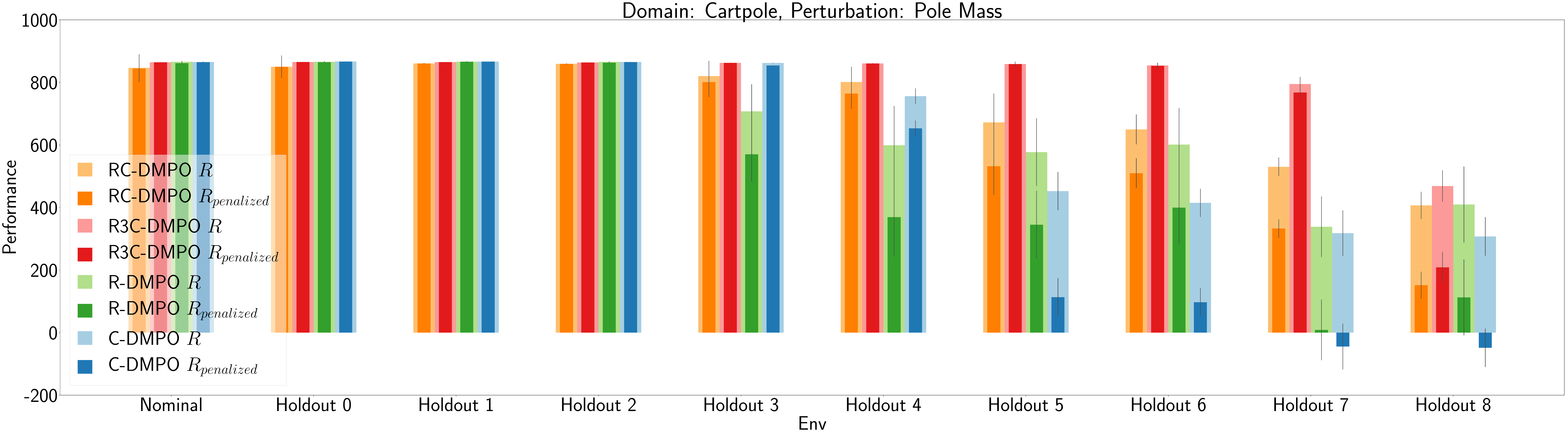}
        \includegraphics[width=0.83\textwidth]{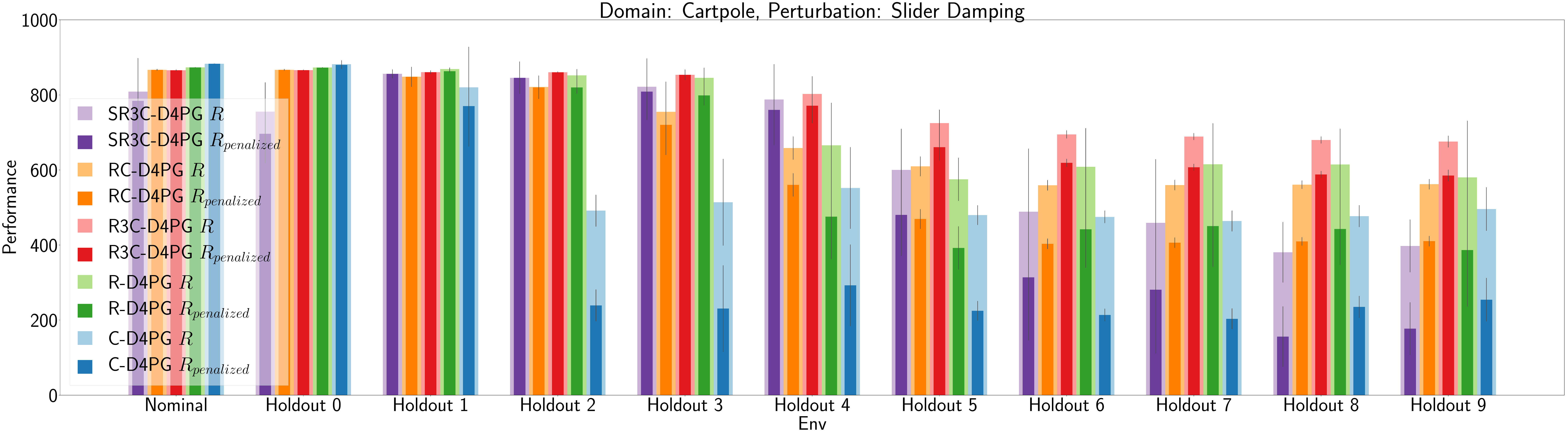}
        \includegraphics[width=0.83\textwidth]{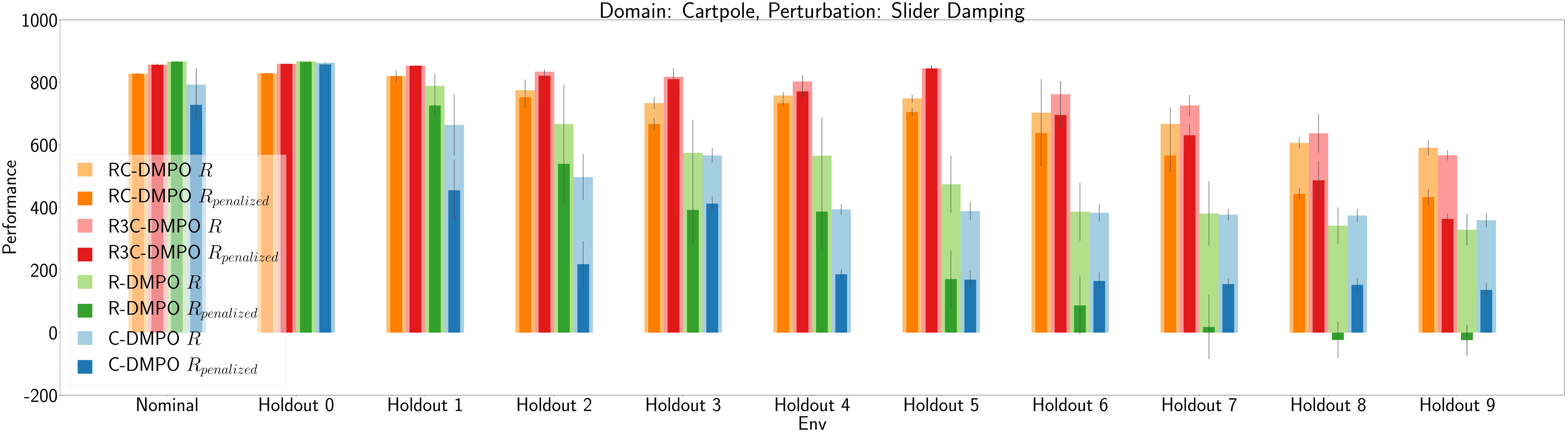}
        \includegraphics[width=0.83\textwidth]{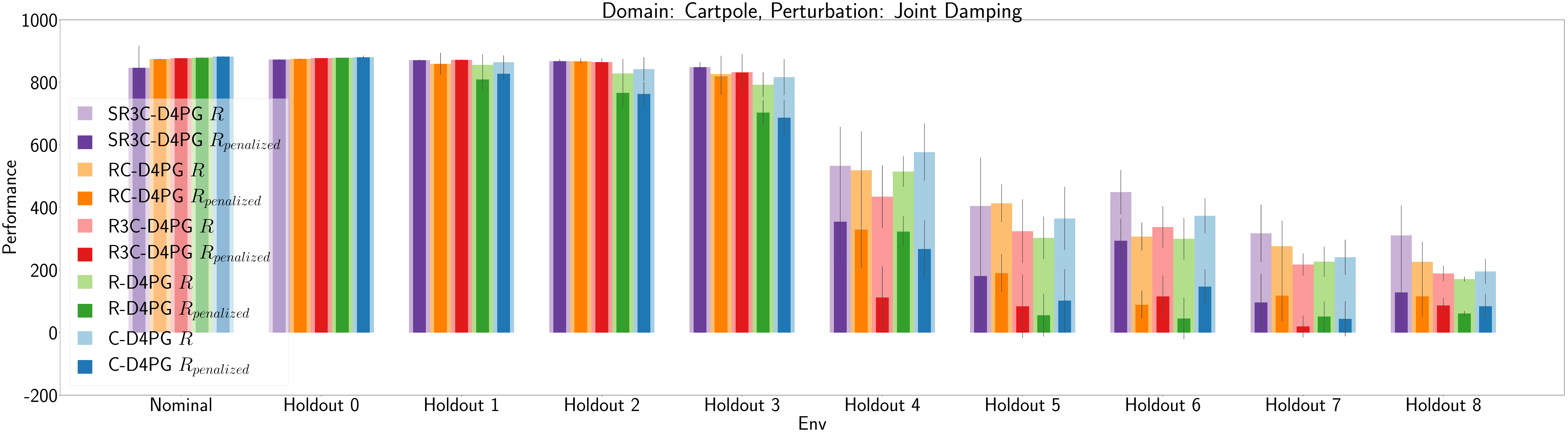}
        \includegraphics[width=0.83\textwidth]{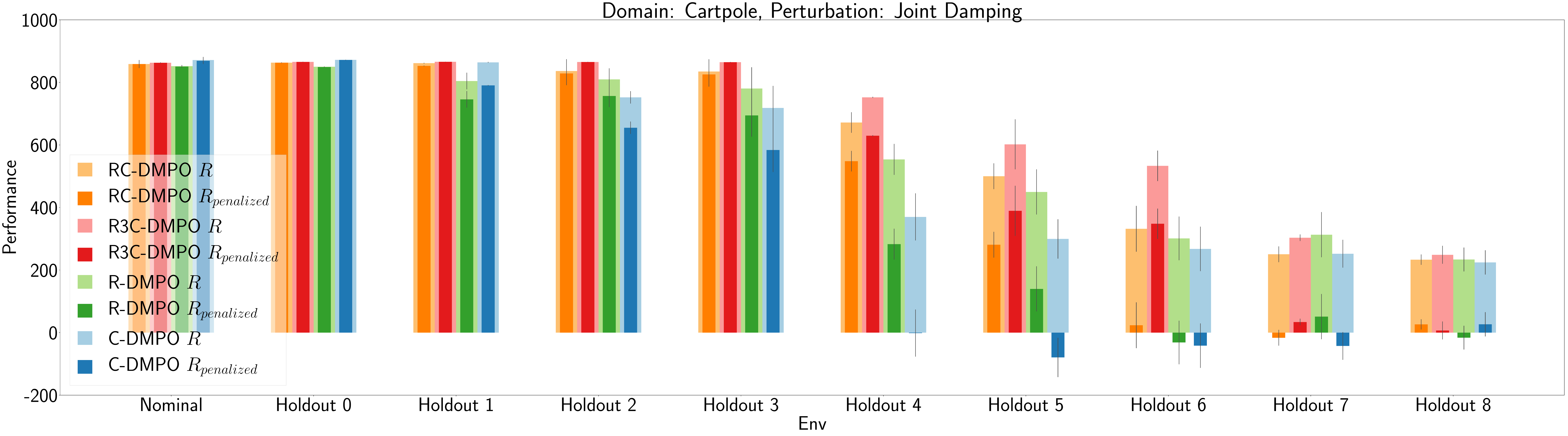}
    \caption{The robustness performance of the D4PG variants per task (row).}
    \label{fig:overall_results1}    
\end{figure}

\begin{figure}[!h]
    \centering
        \includegraphics[width=0.83\textwidth]{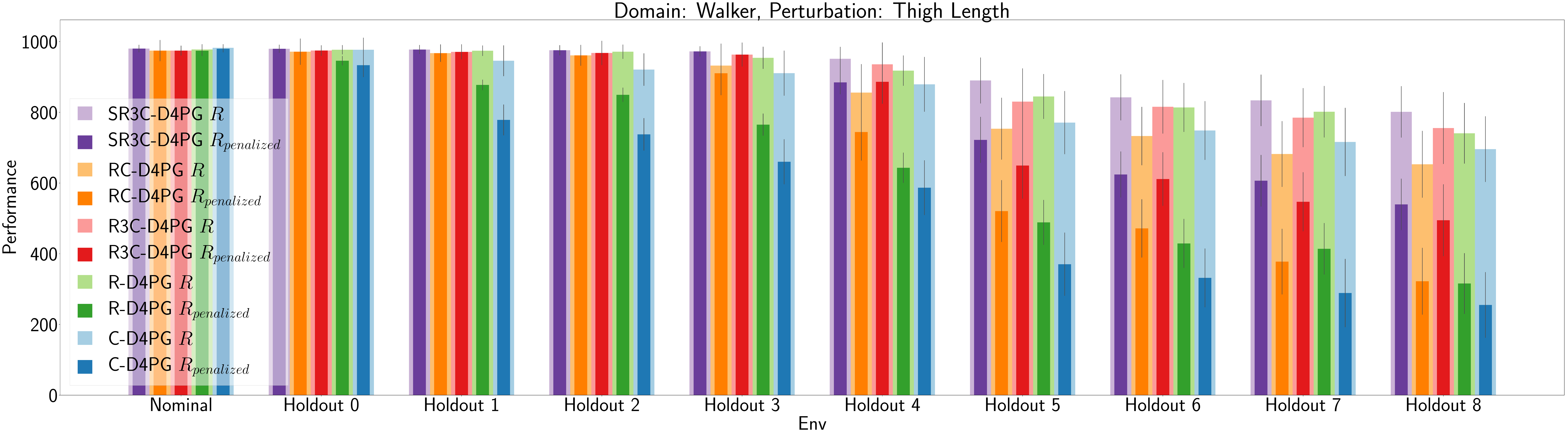}
        \includegraphics[width=0.83\textwidth]{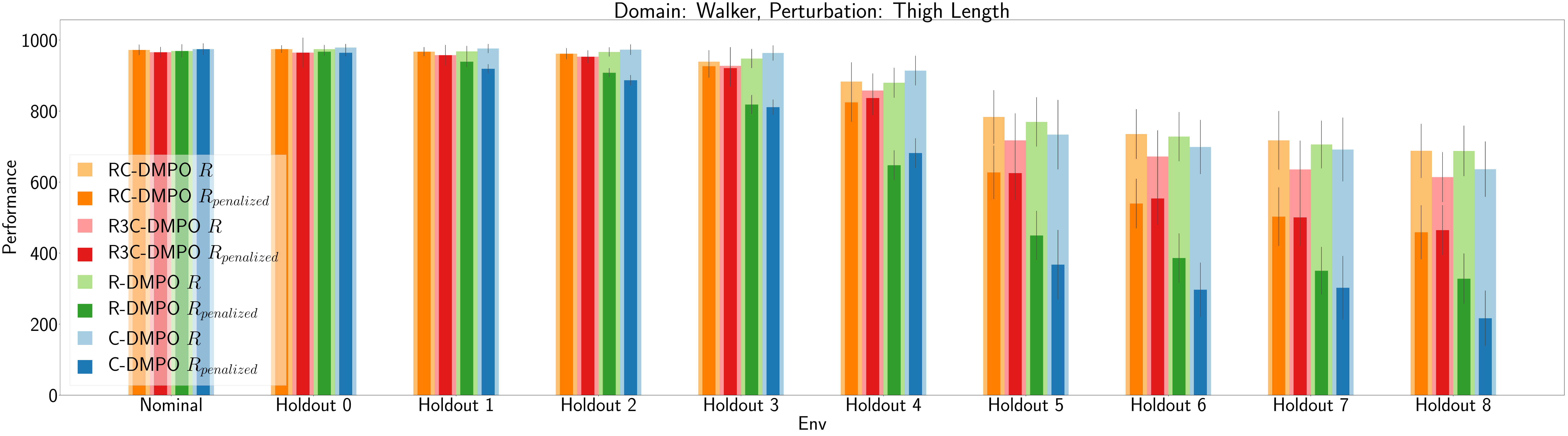}        
        \includegraphics[width=0.83\textwidth]{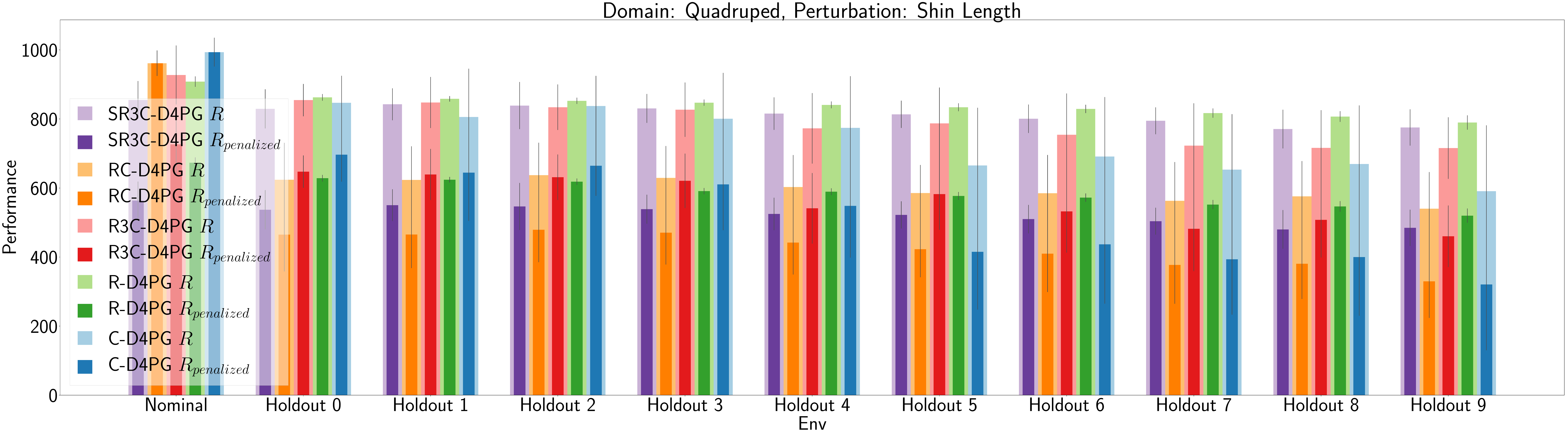}
        \includegraphics[width=0.83\textwidth]{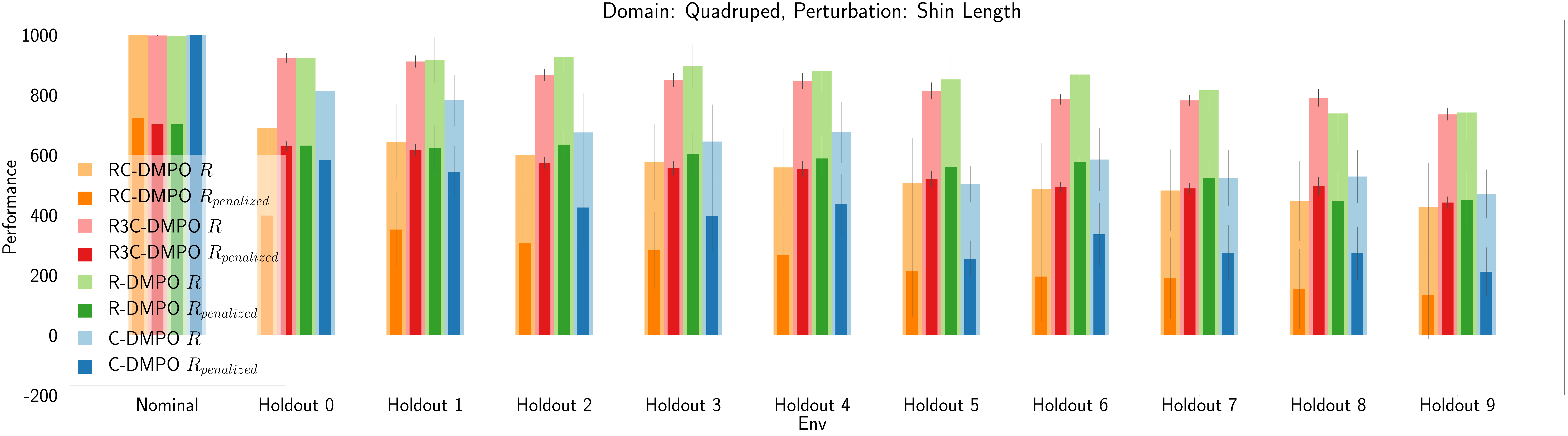}
        \includegraphics[width=0.83\textwidth]{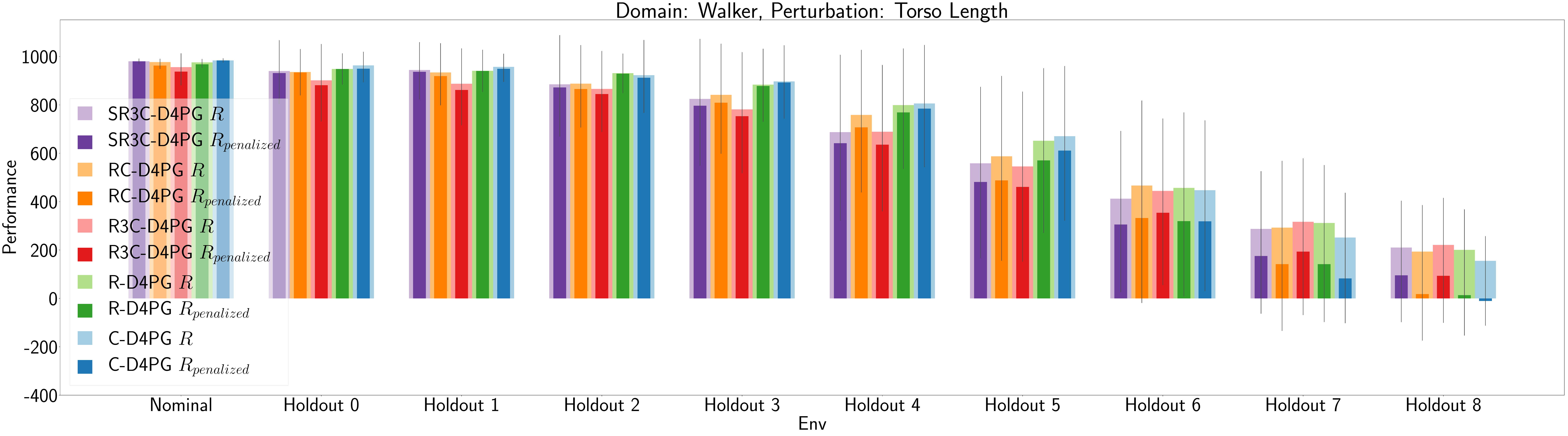}
        \includegraphics[width=0.83\textwidth]{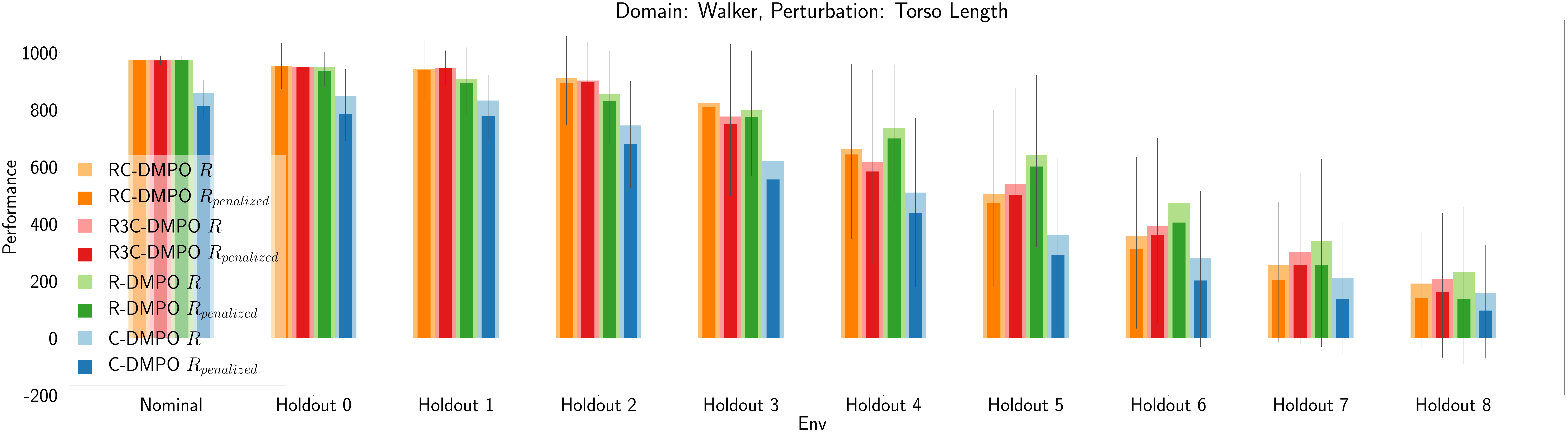}
    \caption{The robustness performance of the DMPO variants per task (row).}
    \label{fig:overall_results2}    
\end{figure}

\subsection{Investigative Studies}
\label{app:investigate}

\begin{figure}[!h]
    \centering%I'm updating it now (the aspect ratio), so it fits.
        \includegraphics[width=0.9\textwidth]{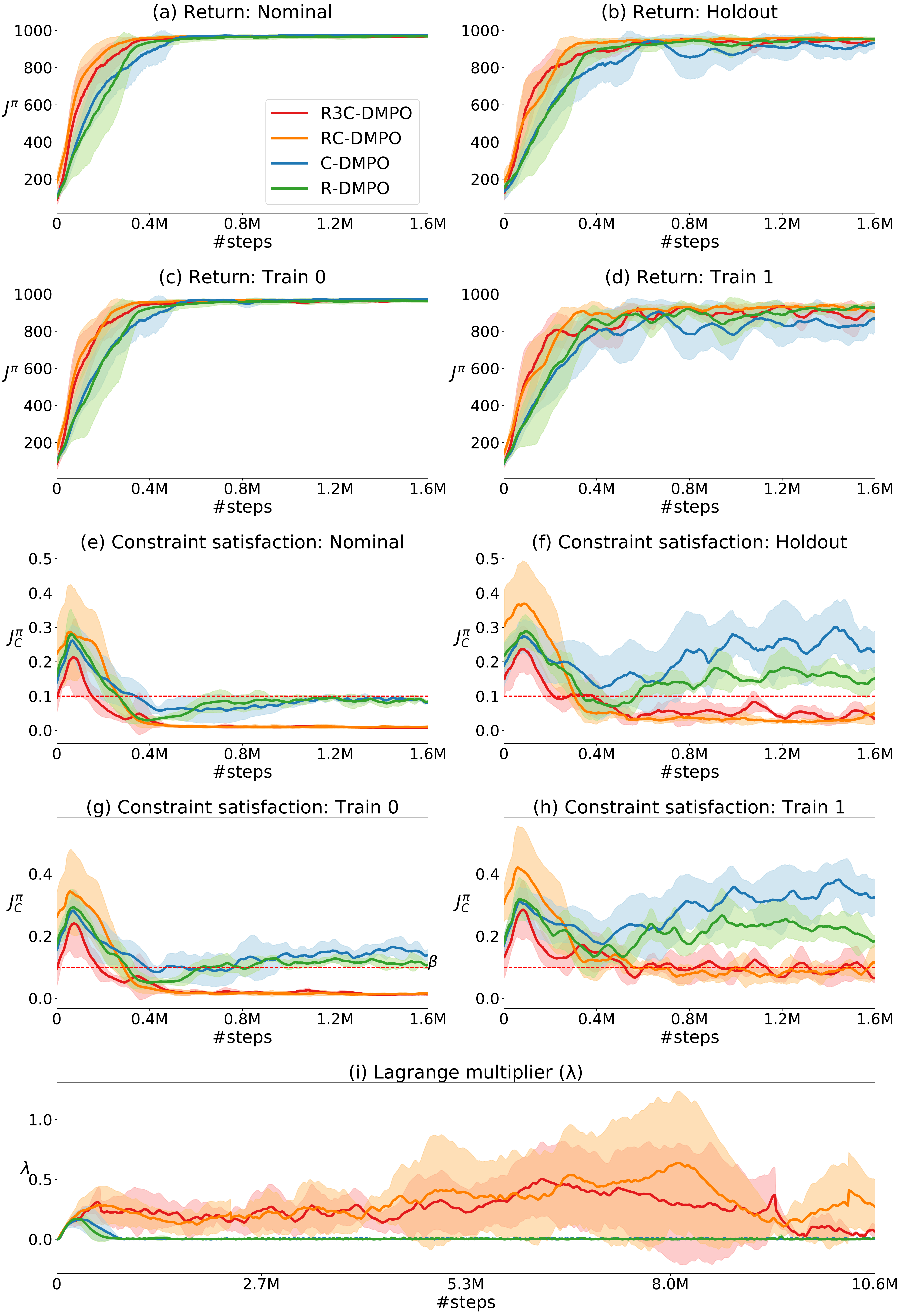}
    \caption{Extended variant of Figure 4 from the main paper.}
    \label{fig:investigate}    
\end{figure}

\end{document}